\newcommand{\rank}{\mathrm{rank}}
\newcommand{\nhalf}{{\nicefrac{1}{2}}} 
\newcommand{\norm}[2]{\lVert{#1}\rVert_{#2}}
\newcommand{\mc}{\!\!:\!\!}
\newcommand{\R}{\mathbb{R}}  
\newcommand{\vect}[1]{\mathbf{#1}}
\newcommand{\vzero}{\vect{0}}
\newcommand{\vd}{\vect{d}}
\newcommand{\vn}{\vect{n}}
\newcommand{\vq}{\vect{q}}
\newcommand{\vu}{\vect{u}}
\newcommand{\vw}{\vect{w}}
\newcommand{\vy}{\vect{y}}
\newcommand{\vz}{\vect{z}}
\newcommand{\vth}{\vect{\boldsymbol{\theta}}}
\newcommand{\vnu}{\vect{\boldsymbol{\nu}}}
\newcommand{\verr}{\vect{\boldsymbol{\epsilon}}}
\newcommand{\ncw}{p}  
\newcommand{\dd}{n}	  
\newcommand{\Dict}{D} 
\newcommand{\at}{\vd} 
\newcommand{\atm}{\at_{\max}} 
\newcommand{\wv}{\vw} 
\newcommand{\we}{w} 
\newcommand{\tv}{\vy} 
\newcommand{\lm}{{\lambda_{\max}}}
\newcommand{\wvo}{\hat{\wv}} 
\newcommand{\weo}{\hat{w}} 
\newcommand{\vtho}{\hat{\vth}} 
\newcommand{\vtht}{\tilde{\vth}} 
\newcommand{\FS}{\mathcal F} 
\newcommand{\RR}{\mathcal{R}} 
\newcommand{\fr}{{\color{black}{\psi}}} 
\newcommand{\lat}{\lambda_{t}}
\newcommand{\fracn}[2]{{#1}/{#2}}
\newtheorem{proposition}{Proposition}
\newtheorem{theorem}{Theorem}
\newtheorem{lemma}{Lemma}
\newtheorem{corollary}{Corollary}
\DeclareMathOperator*{\sgn}{sign}
\DeclareMathOperator*{\diam}{diam}
\title{Feedback-Controlled Sequential Lasso Screening}
\author{Yun Wang, Xu Chen and Peter J.~Ramadge\\
Department of Electrical Engineering\\
Princeton University 
}
\begin{document}


\maketitle

\begin{abstract}
One way to solve lasso problems when the dictionary does not fit into available memory is to first screen the dictionary to remove unneeded features. Prior research has shown that sequential screening methods offer the greatest promise in this endeavor. Most existing work on sequential screening targets the context of tuning parameter selection, where one screens and solves a sequence of $N$ lasso problems with a fixed grid of geometrically spaced regularization parameters. In contrast, we focus on the scenario where a target regularization parameter has already been chosen via cross-validated model selection, and we then need to solve many lasso instances using this fixed value. In this context, we propose and explore a feedback controlled sequential screening scheme.
Feedback is used at each iteration to select the next problem to be solved.
This allows the sequence of problems to be adapted to the instance presented and the number of intermediate problems to be automatically selected. We demonstrate our feedback scheme using several datasets including a dictionary of approximate size 100,000 by 300,000.

\end{abstract}

\section{Introduction}\label{sec:intro}
Sparse dictionary-based representation of data
has proved effective in a wide
range of applications in computer vision, machine learning,
signal processing and statistics \cite{Elad2010Sparse, Wright2010Sparse}.
A sparse representation uses a dictionary
$\Dict\in\R^{\dd\times\ncw}$, with columns $\{\at_i\}_{i=1}^{\ncw}$
called \emph{features},
to represent a data point $\tv$ as a linear combination
$\sum_{i=1}^{\ncw} \we_{i}\at_{i}$
of a relatively small number of the features.
Thus for many $i$, we require $\we_i=0$.
One way to obtain such representations
is by solving the lasso problem \cite{RobTib1996}:
\begin{equation} \label{eq:lasso}
	\min_{\wv\in \R^{\ncw}} \qquad 	
	\nhalf \norm{\tv- \Dict\wv}{2}^2
	+ \lambda  \|\wv\|_{1},
\end{equation}
where $\lambda >0$ is a regularization parameter.

In many of today's sparse coding applications, the dictionary
can be very large, both because the data  dimension is large
(\cite{Wright2009Robust,Wagner2011Towards, Prasad2011Emerging, DeepSparseCoding})
and because an adequate representation requires many features (\cite{Wright2009Robust, Panagakis2009,Mallat2011,SRCvSLS2013, DeepSparseCoding}).
For example,  in music genre classification
the authors of \cite{XuChen2013} employed scattering representations \cite{Mallat2011, AndenMallat2013} of short segments of music data, resulting in dictionaries of size
$\dd=199$ and $\ncw=12,000$ (first order scattering),
and $\dd=11,726$ and $\ncw=12,000$ (second order scattering).
Once dictionaries reach this size, solving \eqref{eq:lasso}
can become a bottleneck in the overall computation.
More importantly, for significantly larger dictionaries, fitting the dictionary
into available memory is a concern.

Several approaches are possible for solving  \eqref{eq:lasso} with a large dictionary. For example, assuming the dictionary fits into memory, one could use early termination of an iterative solver (e.g., FISTA \cite{Beck2009}) to quickly obtain an approximate solution.
Alternatively, one could use a finite number of steps of a sequential greedy method (e.g., OMP \cite{TroppGilbert2007}) to approximate a solution of \eqref{eq:lasso}.
OMP iteratively selects a feature with the largest correlation with the current residual.
Searching for this feature is the most time consuming part of the algorithm but only requires part of the dictionary to be loaded at once. To obtain a solution $\wvo$ that is $m$-sparse, the algorithm needs to hold at most $m$ features in the memory.
However, since the true sparsity is unknown, in general, OMP and similar algorithms will only yield an approximate solution.
Both of these approaches lead (in general) to an approximate solution of \eqref{eq:lasso}.

Several recent studies have investigated the following alternative approach.
For a given $\tv$ and $\lambda$, use duality to quickly identify a subset of
features in $\Dict$ that are certified to have zero weight in a solution $\wvo$ of \eqref{eq:lasso}. By removing these features from the dictionary and solving a smaller lasso problem, we can obtain an exact solution to the original problem.
In \cite{Ghaoui2012} this approach is called SAFE screening,
indicating that screening does not reject any needed feature.
Just as important, screening can be executed with only a few features loaded into memory at once. Hence it has the potential to significantly reduce the size of dictionary that is provided to a lasso solver.
Recent work on safe screening  has further developed this idea \cite{Xiang2011Learning_b,Xiang2012Fast,
ellpscr2012,dpp2015,YunWang2013a, GramfortICML15, GramfortNIPS15, GramfortNIPS16}.
The survey \cite{LSS} reviews these recent methods.
A related but distinct form of lasso screening \cite{Tibshirani2010Strong} allows false rejection of features with small probability. We do not explicitly consider such screening tests, but our results are also relevant to this approach.

The core idea of SAFE screening is to bound the solution of the dual problem
of \eqref{eq:lasso} within a region $\RR$ and compute
$\mu(\at_i)=\max_{\vth\in \RR} \at_i^T\vth$.
The value $\mu({\at_i})$ is then used to decide if feature
$\at_i$ can be removed.
Since these tests are applied once prior to solving \eqref{eq:lasso},
we refer to them as ``one-shot'' tests.
A parameter that is often useful in such tests is the value $\lm=\max_i |\at_i^T\tv|$.
Roughly, this gives a measure of how well a feature in $\Dict$ matches $\tv$.
Empirical studies indicate that current one-shot screening tests perform well
for normalized values of $\lambda/\lm$ of moderate to large size,
but performance quickly declines as $\lambda/\lm$ falls below $0.4$ \cite{LSS}.
To address this problem, one can employ a more complex form of screening known as sequential screening \cite{Ghaoui2012, Tibshirani2010Strong}.
In sequential schemes, one screens and solves a sequence of $N$ lasso problems with $\lambda_1>\cdots>\lambda_N$.
The key point is that for instance $(\tv,\lambda_k)$, one utilizes the dual solution $\vtho_{k-1}$ to obtain a region bound on $\vtho_k$. This is used to apply a one-shot screening test to reduce the dictionary. And a standard solver is then used to solve the reduced problem to find  $\wvo_k$.

Existing state-of-the-art sequential screening algorithms include sequential Dome \cite{YunWang2013b}, sequential Strong rule \cite{Tibshirani2010Strong}, sequential Enhanced DPP rule \cite{dpp2015} among others \cite{Ghaoui2012}. They are situated in the context of model selection. In this setting, all solutions to the sequence of problems are of interest, and the sequence of regularization parameters is thus fixed a priori (typically taking a log-grid of values). Sequential screening is then conducted along this predetermined sequence to expedite this parameter tuning process.

In contrast, we focus on another distinct application context where the best regularization parameter, denoted by $\lambda_t$, has already been chosen via cross-validated model selection, and then we need to solve many lasso instances using this fixed value. Of our particular interest, is $\lambda_t/\lm$ around $0.1$ since many applications of the sparse representation framework have found this range to be most helpful. For example, \cite{Xiang2011Learning_b} found that $\lat/\lm$ in the range $[0.1,0.2]$ maximizes SVM classification accuracy for the handwritten digit recognition problem. In music genre classification, the authors of \cite{XuChen2013} selected $\lat/\lm$ in the range $[0.1,0.15]$ via cross-validation to maximize classification accuracy using the SRC classifier \cite{Wright2009Robust}. Once the proper $\lat$ to use is known, the problem of efficient sequential screening can be very different from the sequential screening that is targeted for the model selection purpose: the solution at $\lat$ is the only one that is of interest, and all other problem instances are merely way points. We are thus given the freedom to design the sequence of regularization parameters $\{\lambda_k\}_{k=1}^N$ with $\lambda_N=\lat$ that specifically target a single problem instance at $\lat$.
In this scenario, the open-loop sequential screening schemes using a geometrically spaced sequence $\{\lambda_k\}_{k=1}^N$, which is determined before any problem in the sequence is solved, may not be a good idea. And it would be an advantage if $\{\lambda_k\}_{k=1}^N$ could be tuned individually to each instance $(\Dict, \tv,\lat)$.


This paper makes two contributions. First, we design a feedback mechanism to adaptively select the sequence  $\{\lambda_k\}_{k=1}^N$ and $N$ for sequential screening.
The feedback mechanism automatically selects the next value
$\lambda_k$ as a function of the results seen in previous steps of the
sequential screening process. It also determines when to stop, and hence
automatically selects $N$. We call our feedback controlled sequential screening scheme, data-adaptive sequential screening (DASS).
DASS has the advantage that $N$ and $\{\lambda_k\}_{k=1}^N$ are automatically
adapted during the screening  process to the particular instance $(D, \tv,\lat)$.
In addition, the feedback mechanism bounds the region diameter
used for one-shot screening at each iteration by a value set by the user.
Second, we examine the effects of the inevitable errors that accrue in obtaining the solutions (and dual solutions) of the intermediate lasso problems during sequential screening. At each step, sequential screening assumes exact knowledge of the previous dual solution. However, practical lasso solvers introduce a small error.
In the context of classification, we show that the DASS scheme is robust to these errors.

We give required background in \S\ref{sec:prelim}.
Then we introduce DASS and examine its properties in \S\ref{sec:dass}.
We show that DASS ensures the diameters of the regions used for the one-shot screening are bounded by a user selected value, and the number of intermediate lasso problems is also bounded.
We then address the issue of inaccuracies in lasso solutions.
\S\ref{sec:exp} discusses the performance of DASS on a selection of datasets, and we conclude in \S\ref{sec:conclude}. Proofs are given in the supplementary material.

\section{Background}\label{sec:prelim}
The one-shot region tests discussed in the introduction
are based on the Lagrangian dual of \eqref{eq:lasso}:
\begin{equation}
	\begin{split}
		\label{eq:dual}
		\max_{\vth\in \R^{\dd}}  \qquad
		& \nhalf \norm{\tv}{2}^2-
		\nicefrac{\lambda^2}{2}
		\norm{\vth -\tv/\lambda}{2}^2\\
		\text{s.t.} \qquad
		& |\at_i^T\vth| \leq 1 \quad \forall
		i=1,2,\ldots,\ncw,
	\end{split}
\end{equation}
with the solutions $\wvo\in\R^{\ncw}$ of
\eqref{eq:lasso} and $\vtho\in \R^{\dd}$ of \eqref{eq:dual} related by:
\begin{equation}\label{eq:relationship}
	\tv = \Dict \wvo + \lambda \vtho,
		\qquad \at_i^T\vtho =
		\begin{cases}
		  \sgn{\weo_i}, &\text{ if } \weo_i\neq 0;\\
		  \gamma\in \left[-1,1\right] & \text{ if } \weo_i=0.
		\end{cases}
\end{equation}
This dual problem has been extensively discussed in the literature, \cite{LSS,Xiang2011Learning_b}.
The set of feasible points $\FS$ of the dual problem is the nonempty, closed, convex polyhedron formed by the intersection of the finite set of closed half spaces $\at^T\vth\leq 1$ for
$\at\in \{\pm \at_i\}_{i=1}^\ncw$.
The objective in \eqref{eq:dual} is maximized by
the projection $\vtho$ of $\tv/\lambda$ onto $\FS$.
This is the \emph{unique} point satisfying
\cite[\S3.1]{HU2001}:
\begin{equation}\label{eq:projineq}
(\tv/\lambda-\vtho)^{T}(\vth-\vtho) \leq 0,
\quad \text{for each } \vth\in \FS.
\end{equation}
\begin{wrapfigure}{r}{0.425\textwidth}
\centering
\includegraphics[width=0.4\textwidth]{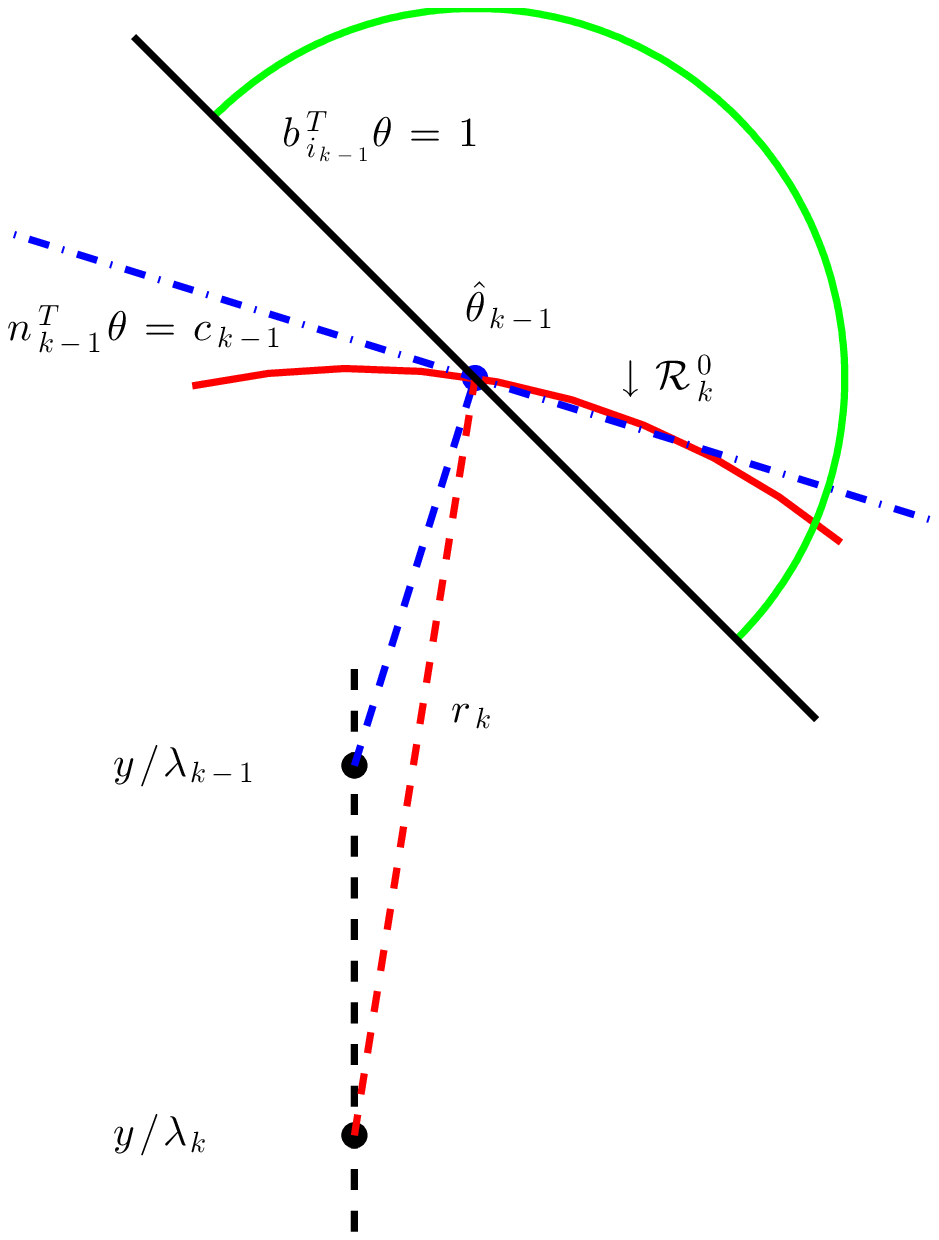}
\caption{\small
Sequential screening.
At step $k$, $\vtho_{k-1}$ defines a spherical bound (solid red curve) on $\vtho_k$
(eq. \eqref{eq:spk-1}).
The (blue dash-dot) hyperplane $\vn_{k-1}^T \vth \leq c_{k-1}$ (eq. \eqref{eq:nck-1})
separates $\FS$ from $\tv/\lambda_{k-1}$.
Hence $\vtho_k$ is contained in the dome $\RR_k^0$ (eq. \eqref{eq:Rk}) formed within the sphere and above the hyperplane.
The solid line (black) is the hyperplane of an active feature at $\vtho_{k-1}$.
The large hemisphere (green) is the DPP bound $\|\vtho_k -\vtho_{k-1}\|_2
\leq 1/\lambda_k - 1/\lambda_{k-1}$ used in \cite{dpp2015}.
The diagram is drawn to scale.
 }
 \label{fig:boundonth}
 \vspace{-1cm}
\end{wrapfigure}
Let $\lm = \max_{i=1}^p |\at_i^T\tv|$.
%
%
For $\lambda\geq \lm$, $\vtho(\lambda)=\tv/\lambda \in \FS$,
with $\vtho(\lm)=\tv/\lm$  on the boundary of $\FS$.
For $\lambda < \lm$, $\tv/\lambda$ moves away from $\FS$ and
$\vtho(\lambda)$ is its unique projection onto the boundary of $\FS$.

To form a one-shot
test, one bounds the dual solution $\vtho$ in a compact set $\RR$,
and computes $\mu(\at_i) =\max_{\vth\in \RR} \at_i^T\vth$.
By \eqref{eq:relationship}, if a feature $\at_i$ satisfies $\mu(\at_i)<1$ and $\mu(-\at_i)<1$,
then $\weo_i=0$; see \cite{Ghaoui2012, LSS}.
The bounding region $\RR$ can be selected as the intersection of a sphere $S(\vq,r) = \{\vth: \|\vth-\vq\|_2 \leq r\}$,
with center $\vq$ and radius $r$, and $m$ half spaces, i.e., $\RR=\{\vth: \|\vth-\vq\|_2 \leq r, \vn_i^T\vth\leq c_i, i=1,\dots,m\}$.
Then $\mu(\at_i) =\max_{\vth\in \RR} \at_i^T \vth$ is obtained by solving a convex program, with closed form solutions available for $m=0,1,2$ \cite{LSS, SymmetryOptimization}.
%

Sequential screening screens and solves a sequence of problems with $\lambda_1>\ldots>\lambda_N$ until $\lambda_N=\lat$. In an open-loop scheme, $N$ and $\{\lambda_k\}_{k=1}^N$ are selected before any solution is obtained.
For example, fix $N$, set $\lambda_1=0.95 \lm$, and
space the $\lambda_k$ geometrically:  $\lambda_k =\alpha \lambda_{k-1}$
with $\alpha = (\lat/\lambda_1)^{1/(N-1)}$.


Note that $\vtho_1$ must lie in the sphere $S(\vq_1,r_1)$  with center  $\vq_1 = \tv/\lambda_1$ and radius $r_1=\|\tv/\lambda_1 - \tv/\lm\|_2$ \cite{LSS}.
In addition,
if $\atm$ is a feature satisfying  $|\atm^T \tv|=\lm$,
then  $\vtho_1$ lies in the half space $H_{\max}=\{\vth\colon\atm^T \vth \leq 1\}$.
So to screen the dictionary at $k=1$, we use the one-shot screening test for
\begin{equation}\label{eq:R1}
\RR_1^0=S(\vq_1,r_1) \cap H_{\max} .
\end{equation}
Then we solve the reduced lasso problem to obtain
$\vtho_1$ (see \eqref{eq:relationship}) .

At step $k$, we make use of the dual solution $\vtho_{k-1}$ for instance $(\tv, \lambda_{k-1})$.
Since $\vtho_{k-1}$ is dual feasible, $\vtho_k$  is contained in the sphere $S(\vq_k,r_k)$ with
\begin{equation}\label{eq:spk-1}
\vq_k=\tv/\lambda_k, \qquad
r_k=\|\tv/\lambda_k - \vtho_{k-1}\|_2.
\end{equation}
In addition, since $\vtho_{k-1}$ is the projection of
$\tv/\lambda_{k-1}$ onto $\FS$,
by \eqref{eq:projineq} we have
$\forall \vth\in \FS, (\tv/\lambda_{k-1}- \vtho_{k-1})^{T}\vth
\leq (\tv/\lambda_{k-1}-\vtho_{k-1})^{T}\vtho_{k-1}.
$
Since $0\in \FS$, the right hand side of this inequality is nonnegative.
Hence
$\FS$ lies in the closed half space
$H_k=\{\vth\colon \vn_{k-1}^{T} \vth \leq c_{k-1}\}$ with
\begin{equation}\label{eq:nck-1}
\vn_{k-1}=\frac{\tv/\lambda_{k-1}-\vtho_{k-1}}
{\|\tv/\lambda_{k-1} -\vtho_{k-1}\|_{2}} ,
\qquad
c_{k-1}=\vn_{k-1}^{T}\vtho_{k-1}.
\end{equation}
In particular, $\vtho_k\in \FS$,
so $\vtho_k$ is contained in the region
\begin{equation}\label{eq:Rk}
\RR^0_k=S(\vq_k,r_k)\cap H_k .
\end{equation}
This is illustrated in Fig.~\ref{fig:boundonth}.
We now use the one-shot screening test for $\RR_k^0$
to screen the dictionary.
Then solve the reduced lasso problem  to obtain $\vtho_k$.

\section{Feedback Controlled Sequential Screening}\label{sec:dass}

%
%
Under the assumption of exact computation, we now develop our sequential screening feedback rule and analyze its properties. The issue of approximate lasso solution is examined later.
The diameter of a set $\RR$, denoted by $\diam(\RR)$,
is the maximum distance between any two points in $\RR$.
The following preliminary lemma will be useful.

\begin{lemma}\label{lem:diamR}
If $\RR=\{\vth: \|\vth-\vq\|_2\leq r, \vn^T\vth\leq c\}$ has a nonempty interior, then
$$
\diam(\RR) =
\begin{cases}
2\sqrt{r^2-(\vn^T\vq-c)^2}, & \text{if } \vq \notin \RR;\\
2r, & \text{otherwise}.
\end{cases}
$$
\end{lemma}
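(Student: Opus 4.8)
The plan is to treat the two cases separately and, in each, to establish matching upper and lower bounds on $\diam(\RR)$. Throughout I take $\|\vn\|_2=1$, consistent with the normalized $\vn_{k-1}$ appearing in \eqref{eq:nck-1}; then the signed distance from $\vq$ to the bounding hyperplane $\{\vth:\vn^T\vth=c\}$ is exactly $\vn^T\vq-c$. Since $\vq$ is the sphere's center it always lies in $S(\vq,r)$, so $\vq\in\RR$ if and only if $\vn^T\vq\leq c$; thus the two cases of the lemma correspond precisely to the sign of $\vn^T\vq-c$.

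For the case $\vq\in\RR$ (so $\vn^T\vq\leq c$), the upper bound is immediate from $\RR\subseteq S(\vq,r)$, which gives $\diam(\RR)\leq\diam(S(\vq,r))=2r$. For the matching lower bound I would exhibit a full sphere-diameter lying inside $\RR$: taking any unit vector $\vu$ with $\vu^T\vn=0$ (which exists since $\dd\geq2$), the two points $\vq\pm r\vu$ lie on the sphere and satisfy $\vn^T(\vq\pm r\vu)=\vn^T\vq\leq c$, hence both lie in $\RR$ and are at distance $2r$.

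For the case $\vq\notin\RR$ (so $d:=\vn^T\vq-c>0$), the clean idea is to recenter the sphere at the foot of the perpendicular from $\vq$ onto the hyperplane. Set $\vp=\vq-d\vn$, so that $\vn^T\vp=c$, and let $\rho=\sqrt{r^2-d^2}$; note $d<r$ here, since otherwise the hyperplane would be tangent to or miss the sphere and $\RR$ would have empty interior. The key step is to prove the inclusion $\RR\subseteq S(\vp,\rho)$. I would do this by writing any $\vth\in\RR$ as $\vth=\vp+s\vn+\vw$ with $\vw^T\vn=0$; the half-space constraint gives $\vn^T\vth=c+s\leq c$, so $s\leq0$, while the sphere constraint $\|\vth-\vq\|_2^2=(s-d)^2+\|\vw\|_2^2\leq r^2$ yields $\|\vth-\vp\|_2^2=s^2+\|\vw\|_2^2\leq r^2-d^2+2sd=\rho^2+2sd\leq\rho^2$, the last inequality using $s\leq0$ and $d>0$. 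This gives $\diam(\RR)\leq2\rho$. For the lower bound I would again take a unit $\vu\perp\vn$ and verify that $\vp\pm\rho\vu$ both lie in $\RR$: they are on the hyperplane since $\vu\perp\vn$, and $\|\vp\pm\rho\vu-\vq\|_2^2=d^2+\rho^2=r^2$ places them on the sphere, so they give two points at distance $2\rho=2\sqrt{r^2-(\vn^T\vq-c)^2}$.

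The only genuine content is the containment $\RR\subseteq S(\vp,\rho)$ in the second case; everything else is bookkeeping. I expect the main thing to get right is the sign tracking — that $\vq\notin\RR$ forces $d>0$ and that the half-space constraint forces $s\leq0$, so the cross term $2sd$ is nonpositive. Geometrically this is exactly the statement that slicing a sphere with a hyperplane on the far side of its center yields a cap no wider than its circular base; the recentered-ball inclusion packages this into a one-line estimate and avoids any extreme-point enumeration or spherical-angle optimization.
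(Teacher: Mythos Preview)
Your argument is correct, and it takes a genuinely different route from the paper's proof. The paper sets up the diameter as the constrained optimization $\min_{\vth_1,\vth_2}-\nhalf\|\vth_1-\vth_2\|_2^2$ subject to $\vth_1,\vth_2\in\RR$, forms the Lagrangian, and then exhibits the same extremal pairs you do (namely $\vq\pm r\vu$ in the first case and $\vz\pm r\sqrt{1-\fr^2}\,\vu$ with $\vz=\vq-(\vn^T\vq-c)\vn$ and $\fr=(\vn^T\vq-c)/r$ in the second), verifying all KKT conditions with explicit multipliers $\lambda_1=\lambda_2=2$, $\mu_1=\mu_2=2\fr r$ (resp.\ $\mu_1=\mu_2=0$). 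Thus the lower-bound witnesses coincide in the two proofs; the real difference is in how optimality is certified. The paper appeals to KKT sufficiency, whereas you replace that machinery by the single geometric containment $\RR\subseteq S(\vp,\rho)$, which directly gives the matching upper bound $\diam(\RR)\leq 2\rho$. Your route is more elementary and sidesteps any worry about convexity hypotheses in the KKT step (the objective $-\nhalf\|\vth_1-\vth_2\|_2^2$ is concave, so the sufficiency claim in the paper actually needs more care); the Lagrangian approach, on the other hand, is more mechanical and would extend in form to additional half-space constraints, though the multiplier bookkeeping grows quickly.
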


We next show that if $S(\vq_k, r_k)$, $H_k$ and $\RR^0_k$ are constructed via \eqref{eq:spk-1}, \eqref{eq:nck-1}, and \eqref{eq:Rk}, then $\vq_k \notin \RR^0_k$.

\begin{lemma}\label{lem:q}
For $\RR^0_k$ constructed via \eqref{eq:spk-1}, \eqref{eq:nck-1}, and \eqref{eq:Rk},
$\vq_k\notin \RR^0_k$.
\end{lemma}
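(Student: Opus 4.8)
The plan is to exploit that $\vq_k$ is the \emph{center} of the defining sphere, so the only way $\vq_k$ could fail to lie outside $\RR^0_k$ is through the half-space constraint. Indeed $\norm{\vq_k-\vq_k}{2}=0\leq r_k$, so $\vq_k\in S(\vq_k,r_k)$ always; hence by \eqref{eq:Rk}, $\vq_k\notin\RR^0_k$ if and only if $\vq_k\notin H_k$, i.e.\ if and only if $\vn_{k-1}^T\vq_k>c_{k-1}$. Substituting \eqref{eq:spk-1} and \eqref{eq:nck-1} and clearing the (positive) normalizing denominator of $\vn_{k-1}$, this reduces to the single scalar inequality
\begin{equation}\label{eq:toprove}
(\tv/\lambda_{k-1}-\vtho_{k-1})^{T}(\tv/\lambda_k-\vtho_{k-1})>0 .
\end{equation}

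To establish \eqref{eq:toprove} I would split the second factor about the point $\tv/\lambda_{k-1}$, writing $\tv/\lambda_k-\vtho_{k-1}=(\tv/\lambda_k-\tv/\lambda_{k-1})+(\tv/\lambda_{k-1}-\vtho_{k-1})$. The inner product then breaks into two pieces. The ``self'' piece is $\norm{\tv/\lambda_{k-1}-\vtho_{k-1}}{2}^2$, which is \emph{strictly} positive: the very definition \eqref{eq:nck-1} of $\vn_{k-1}$ requires its normalizing denominator to be nonzero, so $\tv/\lambda_{k-1}\neq\vtho_{k-1}$. Using $\tv/\lambda_k-\tv/\lambda_{k-1}=(1/\lambda_k-1/\lambda_{k-1})\tv$, the ``cross'' piece is $(1/\lambda_k-1/\lambda_{k-1})(\tv/\lambda_{k-1}-\vtho_{k-1})^{T}\tv$. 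It thus remains only to check that this cross piece is nonnegative.

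The crux --- and the one place where the geometry is genuinely used --- is showing $(\tv/\lambda_{k-1}-\vtho_{k-1})^{T}\tv\geq 0$. The tempting move of applying the projection inequality \eqref{eq:projineq} directly at $\vq_k=\tv/\lambda_k$ fails, because $\tv/\lambda_k$ need not lie in $\FS$ once $\lambda_k<\lm$. Instead I would invoke \eqref{eq:projineq} at step $k-1$ (where $\vtho_{k-1}$ is the projection of $\tv/\lambda_{k-1}$ onto $\FS$) with the feasible point $\vth=\vzero\in\FS$, which gives $(\tv/\lambda_{k-1}-\vtho_{k-1})^{T}\vtho_{k-1}\geq 0$. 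Since $(\tv/\lambda_{k-1}-\vtho_{k-1})^{T}(\tv/\lambda_{k-1})$ equals this nonnegative quantity plus $\norm{\tv/\lambda_{k-1}-\vtho_{k-1}}{2}^2$, it too is nonnegative; multiplying by $\lambda_{k-1}>0$ yields $(\tv/\lambda_{k-1}-\vtho_{k-1})^{T}\tv\geq 0$. Because the sequence is decreasing, $1/\lambda_k-1/\lambda_{k-1}>0$, so the cross piece is nonnegative; combined with the strictly positive self piece this gives \eqref{eq:toprove}, and hence $\vq_k\notin\RR^0_k$. The main obstacle is precisely this indirect use of the projection inequality: one must route through $\tv/\lambda_{k-1}$ and the origin $\vzero\in\FS$ rather than through $\vq_k$ itself, and keep track of which inequality is strict so that the final bound is strict.
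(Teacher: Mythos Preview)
Your proof is correct. Both your argument and the paper's reduce to showing $\vq_k\notin H_k$ and both ultimately rest on $0\in\FS$ together with $\lambda_k<\lambda_{k-1}$; the difference is only in how the scalar inequality is unpacked. The paper exploits the multiplicative relation $\vq_k=(\lambda_{k-1}/\lambda_k)\vq_{k-1}$: since the hyperplane $\vn_{k-1}^T\vth=c_{k-1}$ strictly separates $\vq_{k-1}$ from $\FS$, one has $\vn_{k-1}^T\vq_{k-1}>c_{k-1}$, and since $c_{k-1}\geq 0$ (this is where the paper uses $0\in\FS$, exactly as you do via the projection inequality at $\vth=\vzero$), multiplying by $\lambda_{k-1}/\lambda_k>1$ preserves the strict inequality. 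Your additive splitting about $\tv/\lambda_{k-1}$ is an equivalent route to the same bound, with the mild advantage that the role of $0\in\FS$ and the source of strictness are made fully explicit. One minor point: the paper's proof also treats $k=1$ separately (where $\RR^0_1$ is built from \eqref{eq:R1} rather than \eqref{eq:nck-1}--\eqref{eq:Rk}); since the lemma as stated references only the latter construction, your omission of that case is consistent with the statement.
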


Combining Lemmas \ref{lem:diamR} and  \ref{lem:q} yields the following result.

\begin{proposition} \label{pro:diamRRk}
For $\RR^0_k$ constructed via \eqref{eq:spk-1}, \eqref{eq:nck-1}, and \eqref{eq:Rk},
\begin{equation}\label{eq:diam}
\diam(\RR^0_k) = 2\left (
\frac{1}{\lambda_k}-\frac{1}{\lambda_{k-1}}
\right)
\left( \tv^T(I-\vn_{k-1}\vn_{k-1}^T)\tv \right)^{1/2}.
\end{equation}
\end{proposition}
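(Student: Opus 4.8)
The plan is to combine Lemmas~\ref{lem:diamR} and~\ref{lem:q}, then simplify the resulting radical by exploiting the alignment between the half-space normal $\vn_{k-1}$ and the vector $\tv/\lambda_{k-1}-\vtho_{k-1}$. First I would apply Lemma~\ref{lem:q} to conclude $\vq_k\notin\RR^0_k$, which selects the first branch of Lemma~\ref{lem:diamR}. Identifying $\vq=\vq_k=\tv/\lambda_k$, $r=r_k$, $\vn=\vn_{k-1}$, and $c=c_{k-1}$ in the statement of Lemma~\ref{lem:diamR}, this gives
\[
\diam(\RR^0_k)=2\sqrt{\,r_k^2-(\vn_{k-1}^T\vq_k-c_{k-1})^2\,}.
\]

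Next I would rewrite the radicand in projection form. Setting $\vu=\tv/\lambda_k-\vtho_{k-1}$, equation~\eqref{eq:spk-1} gives $r_k=\norm{\vu}{2}$, while \eqref{eq:nck-1} gives $\vn_{k-1}^T\vq_k-c_{k-1}=\vn_{k-1}^T(\tv/\lambda_k-\vtho_{k-1})=\vn_{k-1}^T\vu$. Hence the radicand equals $\norm{\vu}{2}^2-(\vn_{k-1}^T\vu)^2=\vu^T(I-\vn_{k-1}\vn_{k-1}^T)\vu$, the squared length of the component of $\vu$ orthogonal to $\vn_{k-1}$.

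The crux is the decomposition of $\vu$ along and across $\vn_{k-1}$. Because \eqref{eq:nck-1} makes $\vn_{k-1}$ the unit vector in the direction of $\tv/\lambda_{k-1}-\vtho_{k-1}$, I would split
\[
\vu=\left(\frac{1}{\lambda_k}-\frac{1}{\lambda_{k-1}}\right)\tv+\left(\tv/\lambda_{k-1}-\vtho_{k-1}\right),
\]
so that the second summand is parallel to $\vn_{k-1}$ and is annihilated by the orthogonal projection $P=I-\vn_{k-1}\vn_{k-1}^T$. Since $P$ is symmetric and idempotent, $\vu^TP\vu=\norm{P\vu}{2}^2=(1/\lambda_k-1/\lambda_{k-1})^2\,\tv^TP\tv$. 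Substituting into the displayed diameter and using $\lambda_k<\lambda_{k-1}$, so that $1/\lambda_k-1/\lambda_{k-1}>0$ and the absolute value produced by the square root may be dropped, yields exactly \eqref{eq:diam}.

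The only genuine obstacle is recognizing that $\tv/\lambda_{k-1}-\vtho_{k-1}$ points in precisely the direction $\vn_{k-1}$; this is what makes the half-space term vanish under $P$ and collapses the expression to one depending on $\tv$ alone. Everything else is a routine projection computation. I would also confirm the nonempty-interior hypothesis of Lemma~\ref{lem:diamR} before invoking it, which amounts to checking that $P\tv\neq\vzero$ so that $\vn_{k-1}^T\vq_k-c_{k-1}<r_k$ strictly; this holds precisely when the dome does not degenerate.
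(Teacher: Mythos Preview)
Your proof is correct and follows essentially the same route as the paper: both start from Lemmas~\ref{lem:diamR} and~\ref{lem:q}, decompose $\tv/\lambda_k-\vtho_{k-1}=(1/\lambda_k-1/\lambda_{k-1})\tv+(\tv/\lambda_{k-1}-\vtho_{k-1})$, and exploit that the second summand is parallel to $\vn_{k-1}$. The paper expands $r_k^2$ and $(\vn_{k-1}^T\vq_k-c_{k-1})^2$ separately and cancels cross terms, whereas you recognize the radicand directly as $\vu^T(I-\vn_{k-1}\vn_{k-1}^T)\vu$ and apply the projection in one stroke; this is a presentational streamlining rather than a different argument.
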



The region $\RR^0_k$ provides an initial bound for the one-shot test applied at step $k$.
More half space constraints can be used to form a final bounding region $\RR_k$.
For example, the one-shot test THT in \cite{YunWang2013a, LSS, WangThesis2015} does this by adding a second appropriately selected half space constraint from the dual problem.
No matter how, or how many, additional half space
constraints are added, \eqref{eq:diam} provides an upper bound on the diameter of the final bounding region.
Thus we have the following corollary.

\begin{corollary}
If $\RR_k$ is obtained by adding a finite number of additional half space constraints to $\RR^0_k$, then $\diam(\RR_k)$ is bounded above by the expression in \eqref{eq:diam}.
\end{corollary}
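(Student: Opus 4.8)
The plan is to reduce everything to two elementary facts: that appending half space constraints can only shrink a region, and that the diameter functional is monotone under set inclusion. Since Proposition~\ref{pro:diamRRk} already computes $\diam(\RR^0_k)$ exactly as the right hand side of \eqref{eq:diam}, it suffices to establish the single inequality $\diam(\RR_k)\leq\diam(\RR^0_k)$ and then invoke the proposition. First I would record that, by hypothesis, $\RR_k$ is formed from $\RR^0_k$ by intersecting with finitely many additional closed half spaces $H'_1,\dots,H'_j$, so that $\RR_k=\RR^0_k\cap H'_1\cap\dots\cap H'_j$, and in particular $\RR_k\subseteq\RR^0_k$.

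The key step is monotonicity of the diameter under inclusion. Writing $\diam(\RR)=\sup_{\vx,\vy\in\RR}\|\vx-\vy\|_2$, the supremum defining $\diam(\RR_k)$ ranges over a subcollection of the pairs available for $\diam(\RR^0_k)$; hence $\diam(\RR_k)\leq\diam(\RR^0_k)$. Combining this with Proposition~\ref{pro:diamRRk} immediately yields that $\diam(\RR_k)$ is bounded above by the expression in \eqref{eq:diam}, as claimed.

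The result is thus essentially immediate once monotonicity under inclusion is invoked, so there is no genuine analytical obstacle. The only points meriting a word of care are degenerate cases: if the added constraints render $\RR_k$ empty or a single point, its diameter is $0$ and the inequality holds trivially; and one should keep in mind that Proposition~\ref{pro:diamRRk} supplies the stated exact value for $\RR^0_k$ only in the nonempty-interior setting assumed there, which is precisely the regime in which the corollary is applied.
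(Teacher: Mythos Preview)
Your proof is correct and matches the paper's approach: the paper treats this corollary as immediate from Proposition~\ref{pro:diamRRk} together with the observation that adding half space constraints can only shrink the region, and hence its diameter. No separate proof is given in the paper beyond the sentence preceding the corollary, and your argument is exactly the monotonicity-under-inclusion reasoning that justifies it.
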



Proposition \ref{pro:diamRRk} suggests a rule for
selecting $\lambda_k$ so that
$\diam(\RR_k) \leq R$ where $R>0$ is a user specified parameter.
Using \eqref{eq:diam}, this is achieved by selecting $\lambda_k$ so that:
\begin{align}\label{eq:updater}
\frac{1}{\lambda_k}
=  \frac{1}{\lambda_{k-1} }+
\frac{\nhalf R}{\sqrt{\tv^T(I-\vn_{k-1}\vn_{k-1}^T)\tv} }
\end{align}
\begin{wrapfigure}{L}{0.47\textwidth}
\centering
\IncMargin{1em}
\begin{algorithm}[H]
\Indm
\KwIn{A lasso instance $(\Dict, \tv, \lat)$, and scalar $R>0$}
\KwOut{Solution $\wvo_t$ of the lasso instance}
\Indp
\BlankLine
Initialize $N=0, k=1$, and $\lambda_1$ with $\lat <\lambda_{1}<\lm$\;
\If{k=1}{
Use region \eqref{eq:R1} to one-shot screen $D$ for lasso instance $(\tv,\lambda_1)$\;
Solve the reduced lasso for $\wvo_1$ and $\vtho_1$\;
}
\BlankLine
\While{$\lambda_k>\lat$}{
Set k=k+1 and use \eqref{eq:updater} to obtain $\lambda_k$\;
\If{$\lambda_k\leq \lat$}{
Set $N=k$ and $\lambda_k=\lat$\;
}
\BlankLine
Use $\vtho_{k-1}$ and region \eqref{eq:Rk} to one-shot screen $D$ for lasso instance $(\tv,\lambda_k)$\;
Solve the reduced lasso for $\wvo_k$ and $\vtho_k$\;
}
\BlankLine
\Return{$\wvo_N$}\;
\caption{DASS}\label{alg:DASS}
\end{algorithm}
\DecMargin{1em}
\vspace{-0.85cm}
\end{wrapfigure}

Assuming $\tv$ is not aligned with $\vtho_{k-1}$ (this holds generically),
$\tv^T(I-\vn_{k-1}\vn_{k-1}^T)\tv>0$. Incorporating \eqref{eq:updater} into a sequential screening scheme yields the DASS algorithm in Algorithm \ref{alg:DASS}.
We can initialize $\lambda_1=0.95 \lm$.
At each step, we first use one-shot screening  based on the region $\RR^0_k$,
then solve the resulting reduced lasso problem  (for $k>1$, $\wvo_{k-1}$
can be used as a warm start).
Hence each problem is solved efficiently.

If we write $s_k=1/\lambda_k$, then
\eqref{eq:updater} becomes
$
s_k -s_{k-1} = f(\vn_{k-1}^T \tv)
$.
Thus $s_k$ is a first order, autoregressive process driven
by a nonnegative, nonlinear function of $\vn_{k-1}^T\tv$.
Since $\vn_{k-1}$ depends on the solution of the previous instance, this term is providing feedback
in the update of $s_k$.
The feedback rule ensures that $\lambda_k$ decreases at each step;
this results in $\lambda_k<\lat$, and termination of the algorithm.
We show in Theorem \ref{thm:smlard} that the algorithm always terminates,
give an upper bound on the number of iterations required,
and verify that for $k>1$ the one-shot tests
use a region of diameter at most $R$.
\vspace{0.1cm}

\begin{theorem}\label{thm:smlard}
For $0<\lat<\lm$, let $N$ and the sequence $\{\lambda_k\}_{k=1}^N$ be selected
using the DASS algorithm.
The DASS sequence is guaranteed to terminate after a finite number of steps $N$,
and $\diam(\RR_k) \leq R$, $k=2\mc N$. In addition, if the dual regularization path
$\vtho(\lambda)$ is bounded, i.e., there exists $C>0$, such that $\|\vtho(\lambda)\|\leq C$ for all $\lambda >0$, then
$$
N \leq 1+\frac{\log(1/\lat)}{\log(1+R/2C)} \ .
$$
\end{theorem}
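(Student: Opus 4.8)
The plan is to dispatch the three claims separately, handling the diameter bound first since it falls out of the construction, and then extracting both termination and the iteration count from a single growth estimate on the reciprocals $s_k = 1/\lambda_k$. For the diameter, I would substitute the feedback rule \eqref{eq:updater} directly into the formula of Proposition~\ref{pro:diamRRk}. Whenever \eqref{eq:updater} is applied without being truncated at $\lat$, the two factors in that formula cancel exactly, giving $\diam(\RR^0_k)=R$; when the update is capped so that $\lambda_k=\lat$, the increment $1/\lambda_k-1/\lambda_{k-1}$ is only smaller, so $\diam(\RR^0_k)\leq R$. The Corollary to Proposition~\ref{pro:diamRRk} then shows that forming $\RR_k$ by appending further half-space constraints cannot increase the diameter, so $\diam(\RR_k)\leq R$ for $k=2\mc N$.

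The core of the argument is a lower bound on the per-step increment of $s_k$. From \eqref{eq:updater}, $s_k - s_{k-1} = (R/2)/\sqrt{\tv^T(I-\vn_{k-1}\vn_{k-1}^T)\tv}$, so I need an upper bound on $\sqrt{\tv^T(I-\vn_{k-1}\vn_{k-1}^T)\tv}$. This quantity is exactly the distance from $\tv$ to the line $\R\vn_{k-1}$ through the origin, hence is at most the distance from $\tv$ to any chosen point on that line. The key observation is that, by the definition \eqref{eq:nck-1} of $\vn_{k-1}$, the point $\tv-\lambda_{k-1}\vtho_{k-1}=\lambda_{k-1}(\tv/\lambda_{k-1}-\vtho_{k-1})$ is a scalar multiple of $\vn_{k-1}$ and so lies on this line. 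Consequently $\sqrt{\tv^T(I-\vn_{k-1}\vn_{k-1}^T)\tv}\leq \norm{\tv-(\tv-\lambda_{k-1}\vtho_{k-1})}{2}=\lambda_{k-1}\norm{\vtho_{k-1}}{2}$. Taking instead the origin as the reference point gives the crude bound $\norm{\tv}{2}$, which already forces a uniform positive increment $s_k-s_{k-1}\geq (R/2)/\norm{\tv}{2}$ and therefore termination after finitely many steps, with no hypothesis on the dual path.

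For the logarithmic bound I would invoke the dual-path hypothesis $\norm{\vtho_{k-1}}{2}=\norm{\vtho(\lambda_{k-1})}{2}\leq C$ in the sharper estimate, obtaining $\sqrt{\tv^T(I-\vn_{k-1}\vn_{k-1}^T)\tv}\leq \lambda_{k-1}C = C/s_{k-1}$. Substituting into \eqref{eq:updater} yields the multiplicative recursion $s_k\geq s_{k-1}(1+R/2C)$, so $s_k$ grows at least geometrically from $s_1=1/\lambda_1$. Since the algorithm halts at the first index with $\lambda_k\leq\lat$, the penultimate value obeys $s_{N-1}<1/\lat$; chaining the recursion and taking logarithms then bounds $N$ in terms of $\log(1/\lat)$ and $\log(1+R/2C)$. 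I expect the main obstacle to be precisely the distance-to-the-line estimate: spotting that $\tv-\lambda_{k-1}\vtho_{k-1}$ lies on the axis $\R\vn_{k-1}$ is what upgrades the crude linear growth to the geometric growth that produces the stated bound, and some care is needed with the initialization $\lambda_1$ and the off-by-one contributed by the final capped step when matching the constant exactly.
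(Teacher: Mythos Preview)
Your proposal is correct and follows the same overall architecture as the paper: the diameter bound comes from substituting \eqref{eq:updater} into Proposition~\ref{pro:diamRRk}, finite termination from the crude estimate $\sqrt{\tv^T(I-\vn_{k-1}\vn_{k-1}^T)\tv}\leq\norm{\tv}{2}$, and the logarithmic bound from showing the ratio $\alpha_k=\lambda_k/\lambda_{k-1}$ is bounded by $C/(C+R/2)$ and chaining.

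Where you differ from the paper is in the derivation of the key inequality $\sqrt{\tv^T(I-\vn_{k-1}\vn_{k-1}^T)\tv}\leq \lambda_{k-1}\norm{\vtho_{k-1}}{2}$. The paper computes $\tv^T(I-\vn_{k-1}\vn_{k-1}^T)\tv$ explicitly as $\lambda_{k-1}^2\bigl(\norm{\tv}{2}^2\norm{\vtho_{k-1}}{2}^2-(\tv^T\vtho_{k-1})^2\bigr)/\norm{\tv-\lambda_{k-1}\vtho_{k-1}}{2}^2$ and then minimizes the numerator $\norm{\tv-\lambda\vtho_{k-1}}{2}^2$ over $\lambda$ to extract the factor $1/\norm{\vtho_{k-1}}{2}$. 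Your geometric observation---that the quantity is the distance from $\tv$ to the line $\R\vn_{k-1}$, and that $\tv-\lambda_{k-1}\vtho_{k-1}$ sits on that line---yields the identical bound in one line and makes transparent why the factor $\lambda_{k-1}$ appears. Both routes land on the same multiplicative recursion $s_k\geq s_{k-1}(1+R/2C)$; yours is simply a cleaner way to get there. The paper also pauses to handle the degenerate case where $\vtho_{k-1}$ is aligned with $\tv$ (so the square root vanishes); you may want to add a sentence acknowledging that case, though as the paper notes it is trivial since then $\vtho_k=\tv/\lm$ for all $k$.
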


The assumption in Theorem \ref{thm:smlard} that $\vtho(\lambda)$ is bounded holds if $\tv$ is in the range of $D$ (supplementary material). It always holds if $\rank(D)=\dd$ \cite{PMXu2013}.
We show the number of iterations required for the algorithm to terminate in real problem instances in \S \ref{sec:exp}.




In practice,
it is more realistic to assume we have an approximation $\vtht_{k-1}$ to $\vtho_{k-1}$, with  $\|\vtho_{k-1} - \vtht_{k-1}\|_2\leq \rho$.
For simplicity we assume $\rho$ does not depend on $k$, but this is not essential.
We first verify that the DASS algorithm still terminates after a finite number of steps
and give a bound on the number of steps required.
We state this as a corollary to Theorem \ref{thm:smlard}.

\begin{corollary}\label{cor:smlard}
Suppose we have an approximate dual solution
${\vtht}_k$, with $\|\vtho_{k} - \vtht_{k}\|_2\leq \rho$, $k=1\mc N$.
Under this assumption, the DASS sequence terminates with $N \leq 1+\frac{\log(1/\lat)}{\log(1+ R/2(C+\rho))}$.
\end{corollary}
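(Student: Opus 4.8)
The plan is to adapt the proof of Theorem~\ref{thm:smlard} by replacing the exact dual solutions $\vtho_{k-1}$ with their approximations $\vtht_{k-1}$ throughout the termination argument. The key observation is that the only place where the dual solution enters the iteration count is through the spherical radius $r_k = \|\tv/\lambda_k - \vtho_{k-1}\|_2$ (eq.~\eqref{eq:spk-1}), which in turn controls the factor $\left(\tv^T(I-\vn_{k-1}\vn_{k-1}^T)\tv\right)^{1/2}$ appearing in the update rule \eqref{eq:updater}. In the exact case, the bound $N \leq 1+\frac{\log(1/\lat)}{\log(1+R/2C)}$ comes from showing that $s_k = 1/\lambda_k$ grows by a multiplicative factor of at least $(1+R/2C)$ per step, using that the relevant quantity is bounded by $C$. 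With an approximate dual solution satisfying $\|\vtho_k - \vtht_k\|_2 \leq \rho$, I would show the same geometric growth holds with $C$ replaced by $C+\rho$.

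First I would isolate, from the proof of Theorem~\ref{thm:smlard}, the precise inequality that yields the per-step multiplicative growth of $s_k$. I expect the argument bounds the driving term $\left(\tv^T(I-\vn_{k-1}\vn_{k-1}^T)\tv\right)^{1/2}$ by a quantity controlled by $\|\vtho(\lambda)\|\leq C$, so that each step satisfies $s_k \geq (1+R/2C)\, s_{k-1}$, and then terminates once $s_k \geq 1/\lat$. Second, I would re-derive this bound when the algorithm uses $\vtht_{k-1}$ in place of $\vtho_{k-1}$ to form the normal $\vn_{k-1}$ and hence the update. The triangle inequality gives $\|\vtht_{k-1}\|_2 \leq \|\vtho_{k-1}\|_2 + \rho \leq C + \rho$, so wherever the original proof invoked the bound $C$ on the magnitude of the (dual-path) point used in the update, I substitute $C+\rho$. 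This directly yields the modified growth factor $(1+R/2(C+\rho))$ and therefore the claimed iteration bound after solving $s_k \geq 1/\lat$ for the number of steps, exactly as in the exact case.

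The main obstacle I anticipate is verifying that the geometric-growth inequality genuinely survives the substitution: the perturbation $\rho$ enters both the center-to-solution distance defining $r_k$ and the direction $\vn_{k-1}$, and I must confirm that bounding $\|\vtht_{k-1}\|_2 \leq C+\rho$ suffices to control $\left(\tv^T(I-\vn_{k-1}\vn_{k-1}^T)\tv\right)^{1/2}$ in the same way, without any cross terms that degrade the bound beyond the clean replacement of $C$ by $C+\rho$. If the original proof bounds this term by the distance from $\tv/\lambda_{k-1}$ to the point used in the update (which now is $\vtht_{k-1}$), then the substitution is transparent; otherwise I would need an intermediate step showing the projected quantity remains controlled by $\|\vtht_{k-1}\|_2$ plus a term that does not spoil the logarithmic bound. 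Since the statement to be proved simply mirrors Theorem~\ref{thm:smlard} with $C \mapsto C+\rho$, I expect no new ideas are required beyond this careful bookkeeping of the approximation error through the existing estimate.
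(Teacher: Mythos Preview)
Your proposal is correct and follows essentially the same approach as the paper: substitute $\vtht_{k-1}$ for $\vtho_{k-1}$ in the key inequality from the proof of Theorem~\ref{thm:smlard}, use the triangle inequality to get $\|\vtht_{k-1}\|_2\leq C+\rho$, and observe that the bound $\lambda_{k-1}\bigl(\tv^T(I-\vn_{k-1}\vn_{k-1}^T)\tv\bigr)^{-1/2}\geq 1/\|\vtht_{k-1}\|_2$ (obtained by minimizing the numerator over $\lambda_{k-1}$) holds for any vector in that slot, so the substitution is indeed transparent and yields $\alpha_k\leq (C+\rho)/(C+\rho+R/2)$. The anticipated obstacle about cross terms does not arise, exactly for the reason you guessed.
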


In many classification applications, the lasso regression is often used as a feature extraction step. For each data point $\tv$ to be classified, lasso regression is first used to compute the solution $\wvo$. Then as a feature vector, $\wvo$ is fed into a classifier, e.g., SVM, SRC. We analyze in this classification setting, the effect of accumulated inaccuracy in our sequential screening process. Consider using an approximate solution $\tilde{\wv}=\wvo+\vn$ for classification.
When used in sequential screening, $\tilde{\wv}_{k-1}$ gives an approximate
dual solution $\vtht_{k-1}$, which is used for one-shot screening at step $k$.
So the one-shot test may both fail to reject, and falsely reject features.
The error in $\tilde{\vw}_k$ thus has two sources: from the lasso solver itself, and
from false feature rejection. The latter errors can propagate, leading
DASS to fail to give adequate rejection, or to make too many false rejections causing classification performance to suffer.
We investigate this in \S \ref{sec:exp}.

\begin{figure}[t!]
\centering
\includegraphics[width=0.24\textwidth]{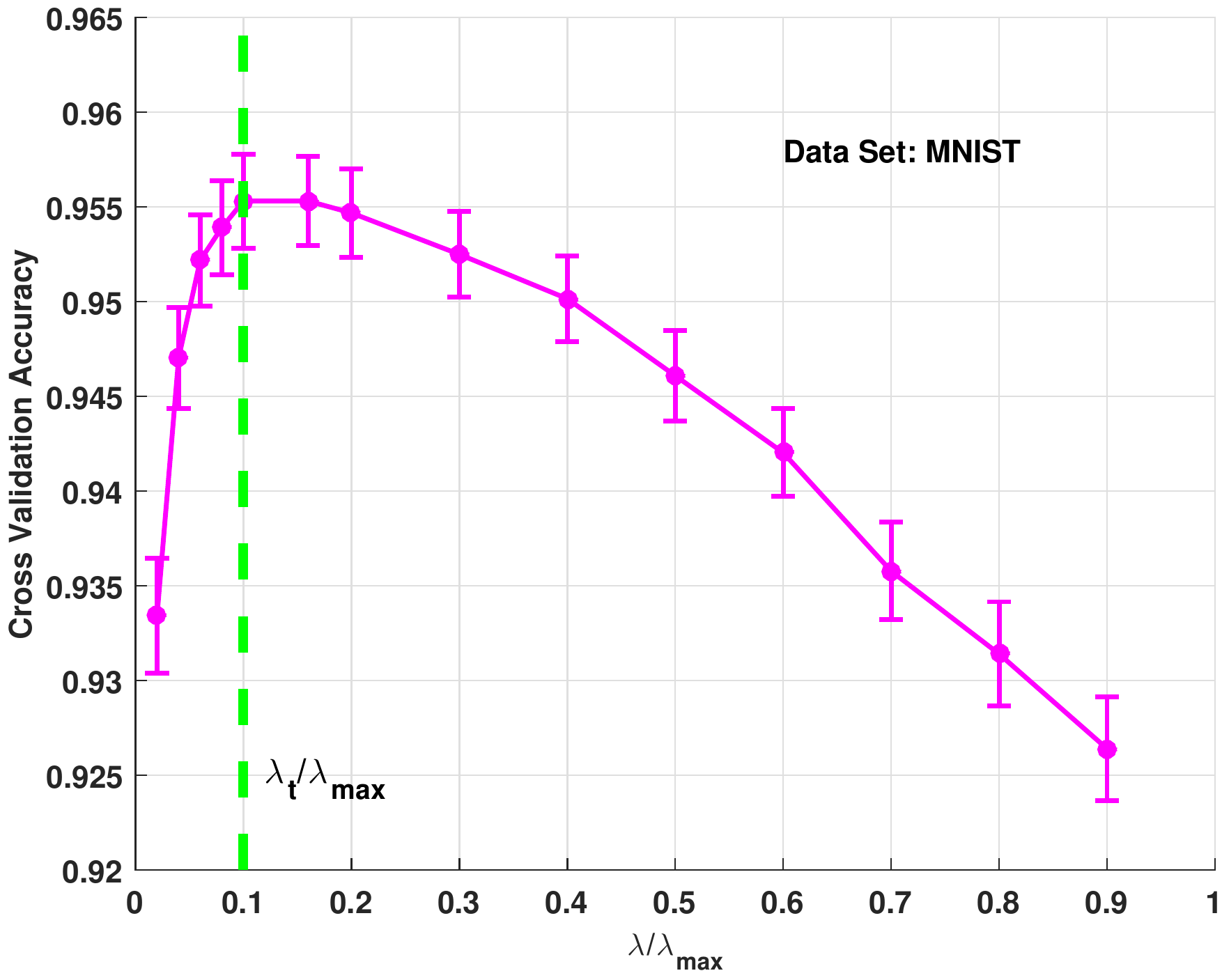}
\includegraphics[width=0.24\textwidth]{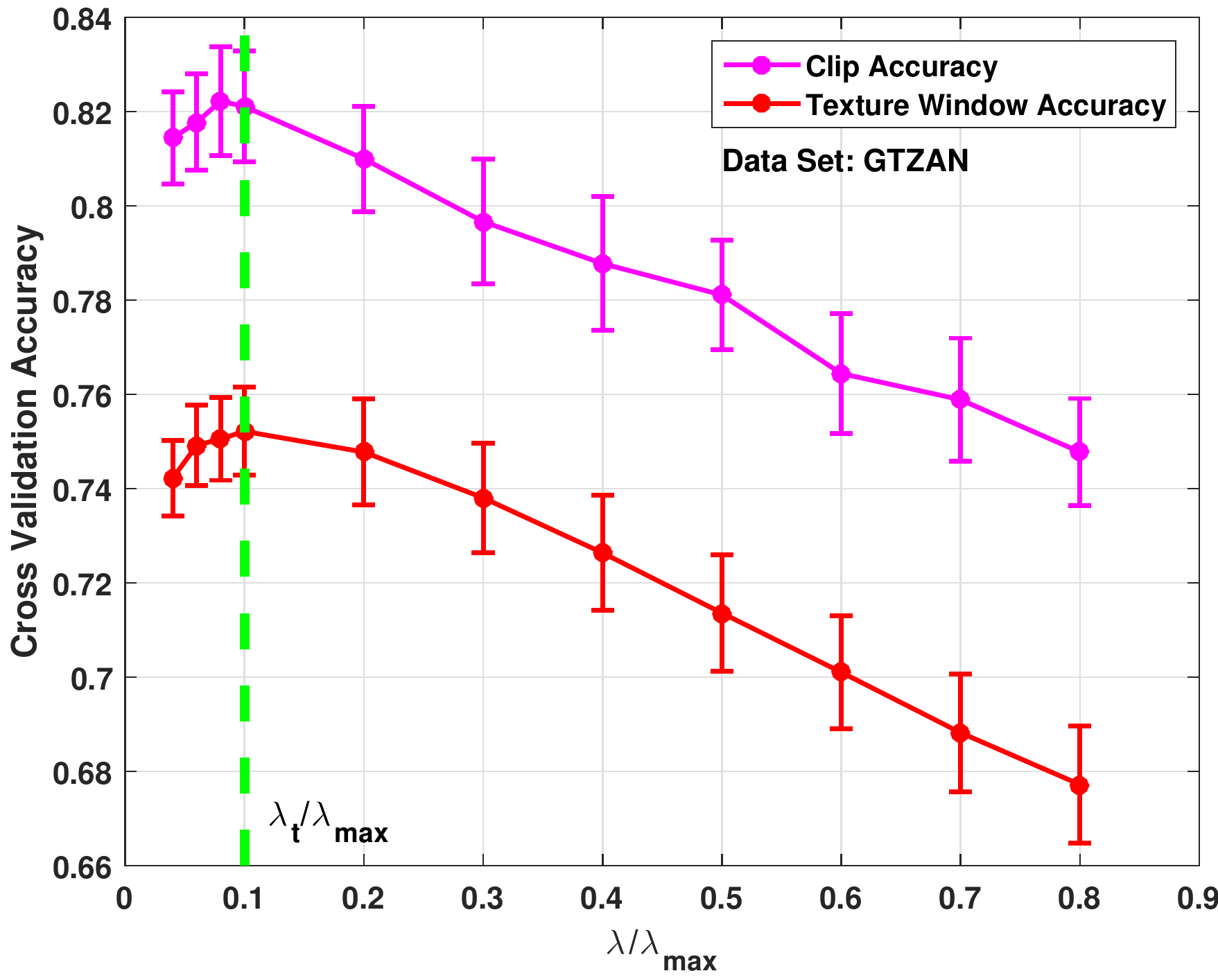}
\includegraphics[width=0.24\textwidth]{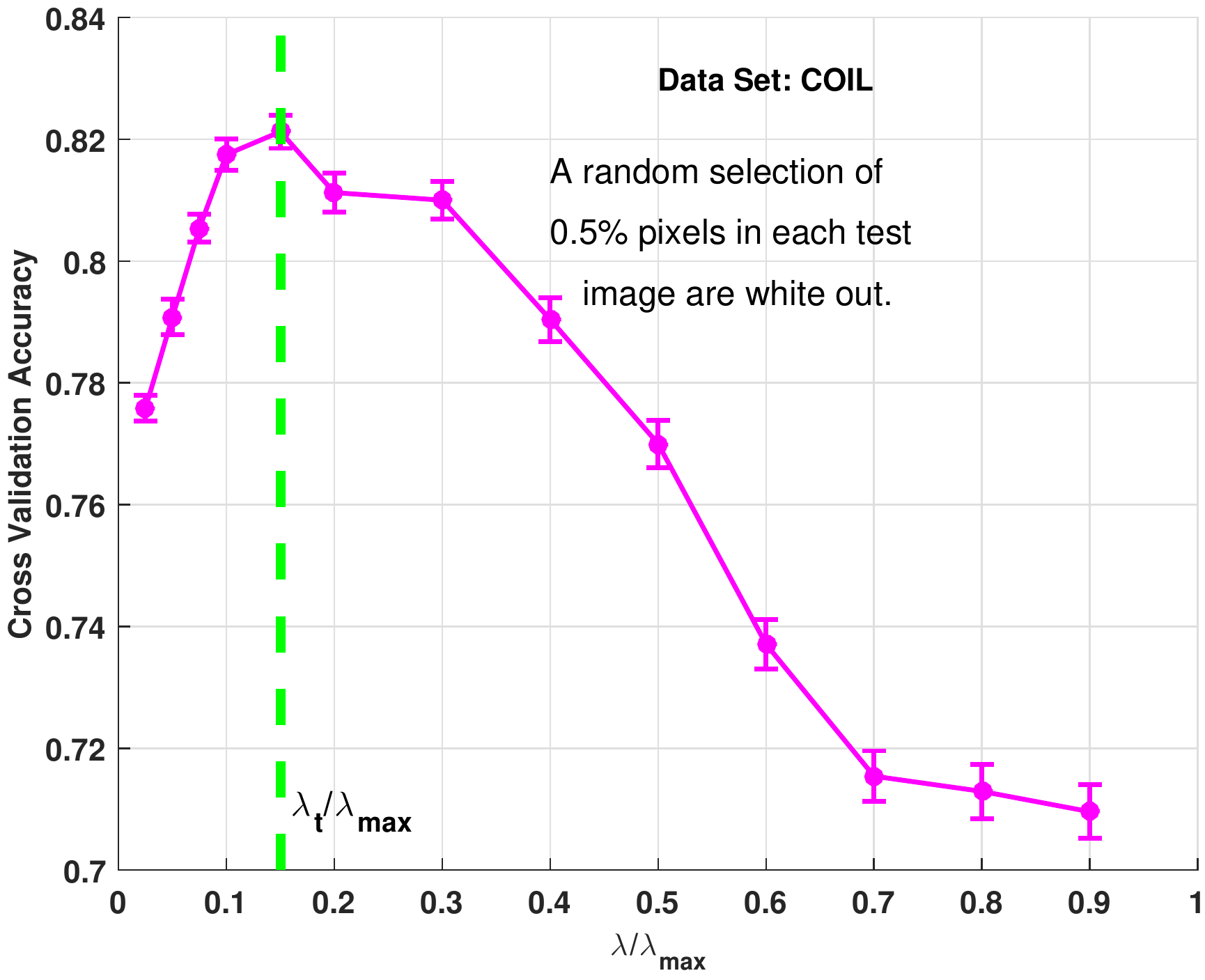}
\includegraphics[width=0.24\textwidth]{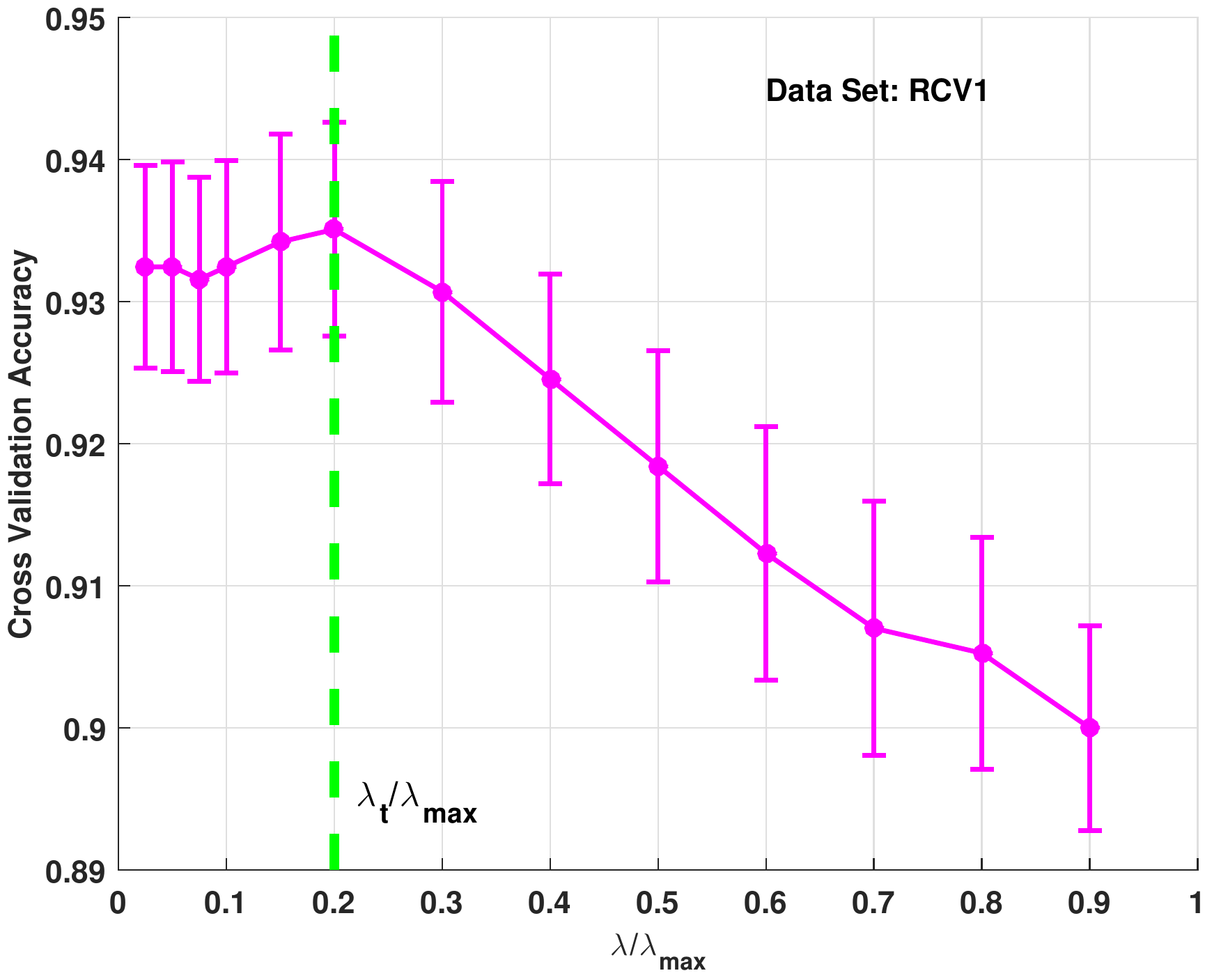}
\caption{\small
$\lambda_t/\lm \approx 0.1$ is of particular interest as cross validation shows this range is most helpful for many classification problems.
(From left to right): handwritten digit recognition on MNIST \cite{LeCun1998The-MNIST}, music genre classification on GTZAN \cite{Tzanetakis2002}, object recognition on COIL \cite{Nene1996Columbia} and text categorization on RCV1 \cite{RCV1}.
}
\label{fig:cv}
\end{figure}

\section{Empirical Tests}\label{sec:exp}
\begin{figure}[t!]
\centering
\includegraphics[width=0.245\textwidth]{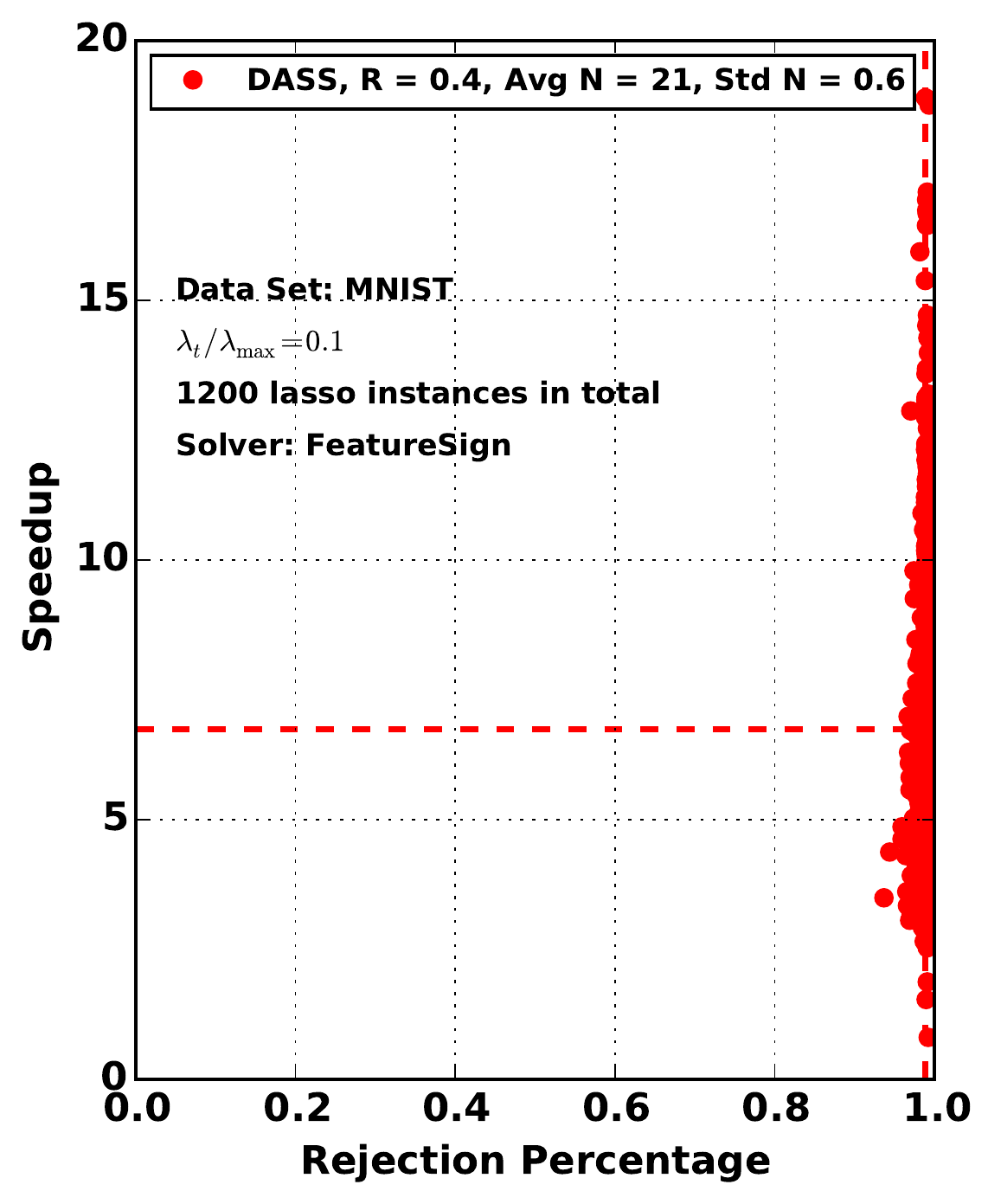}
\includegraphics[width=0.245\textwidth]{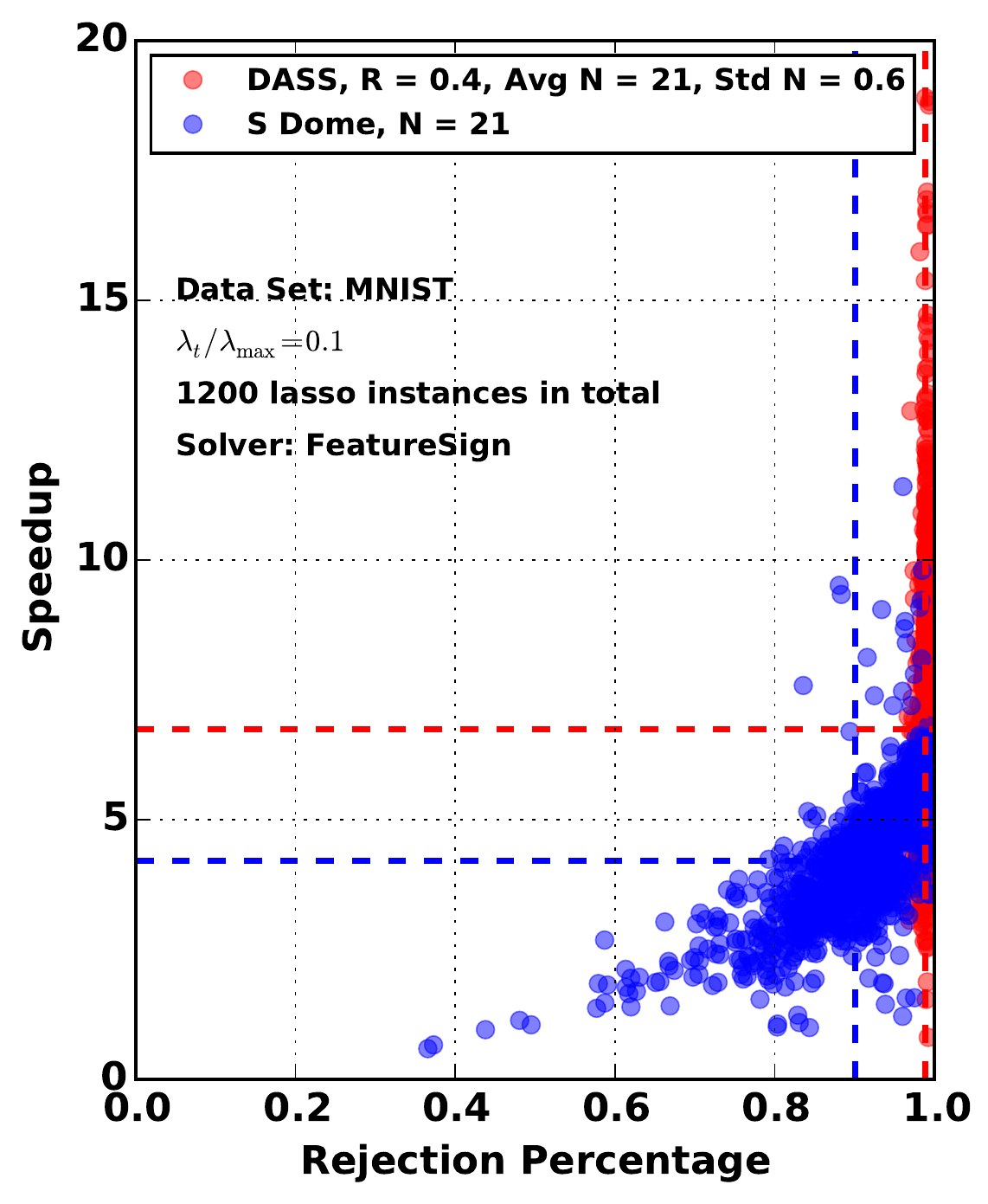}
\includegraphics[width=0.245\textwidth]{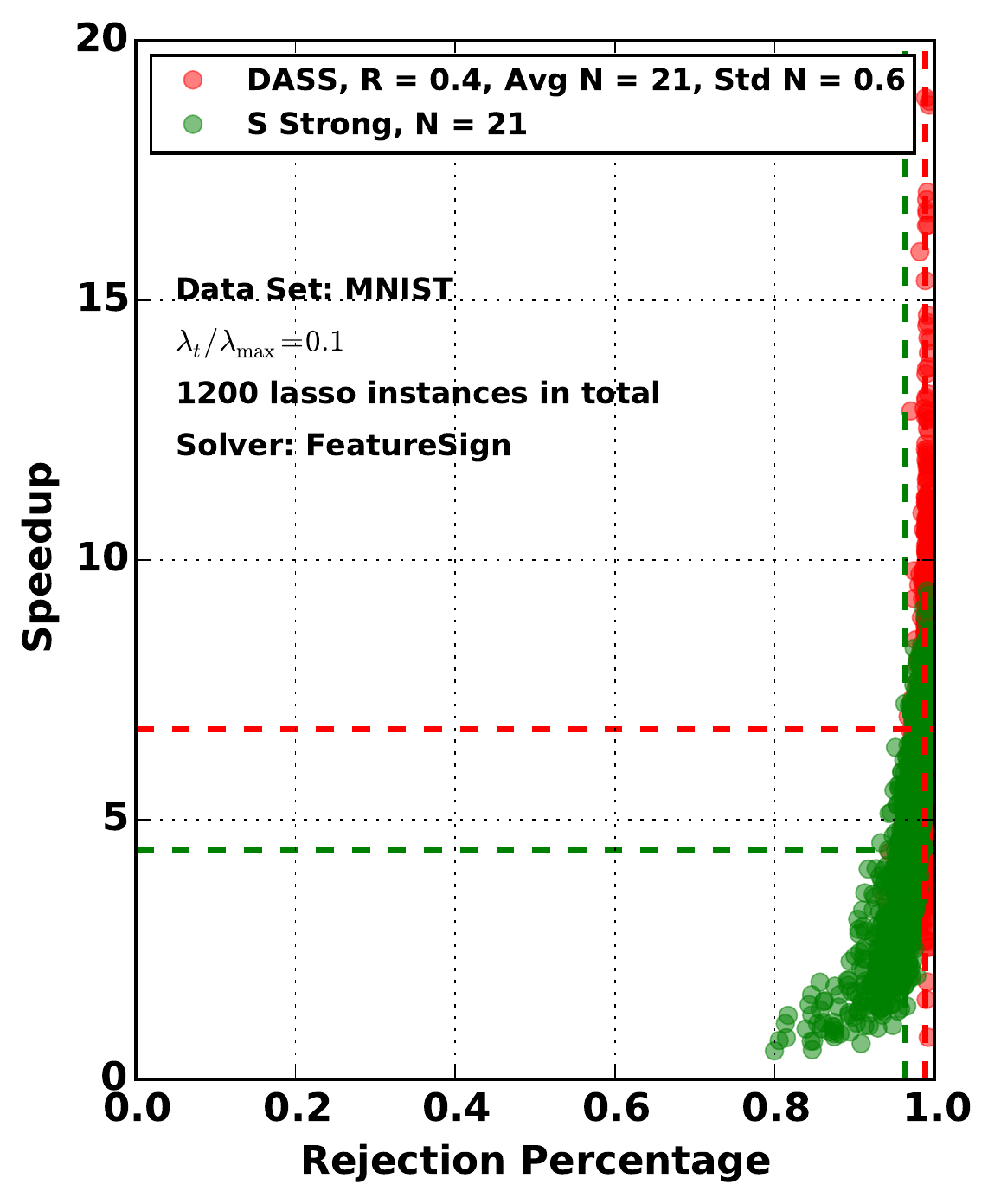}
\includegraphics[width=0.245\textwidth]{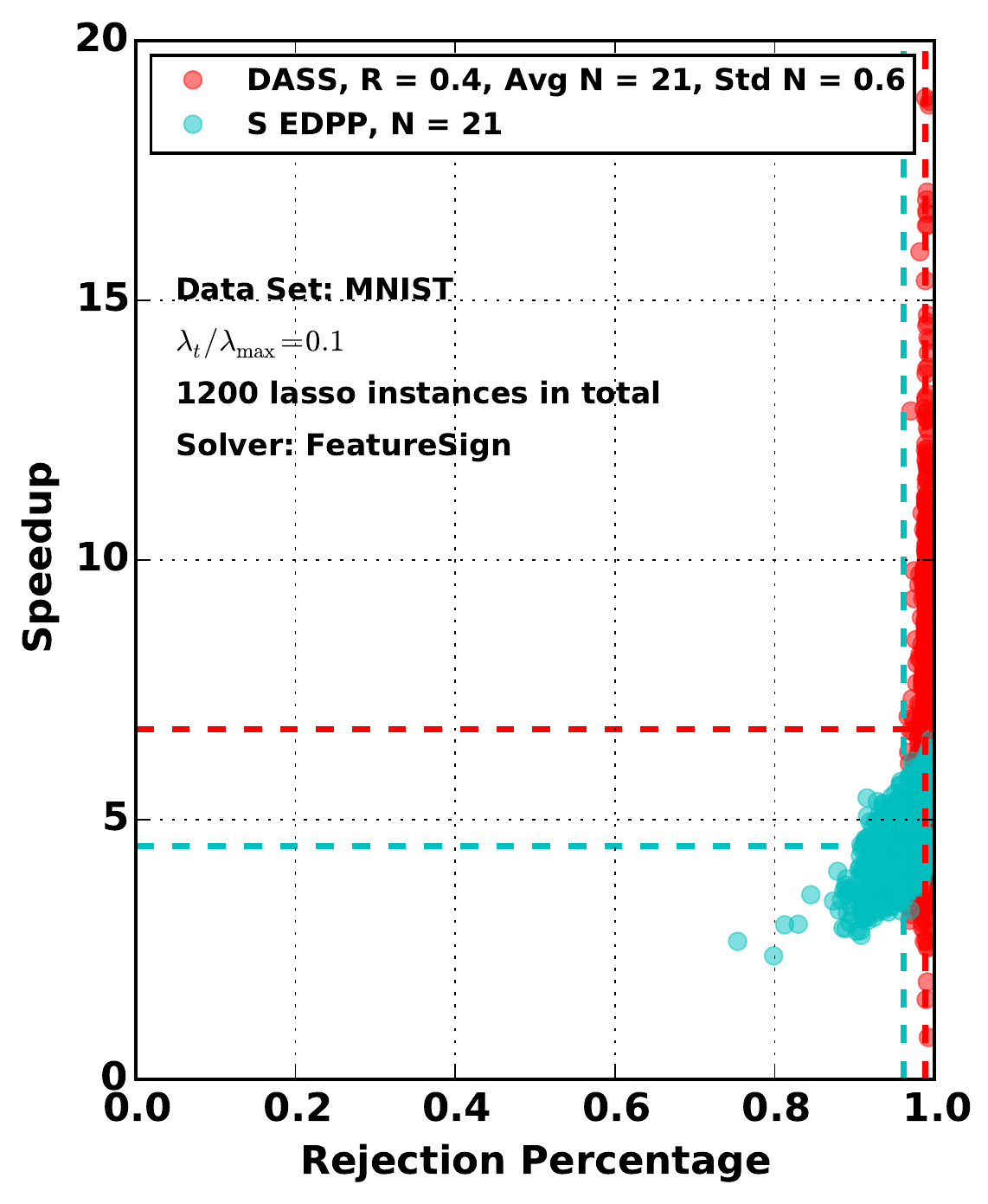}
\\
\includegraphics[width=0.49\textwidth]{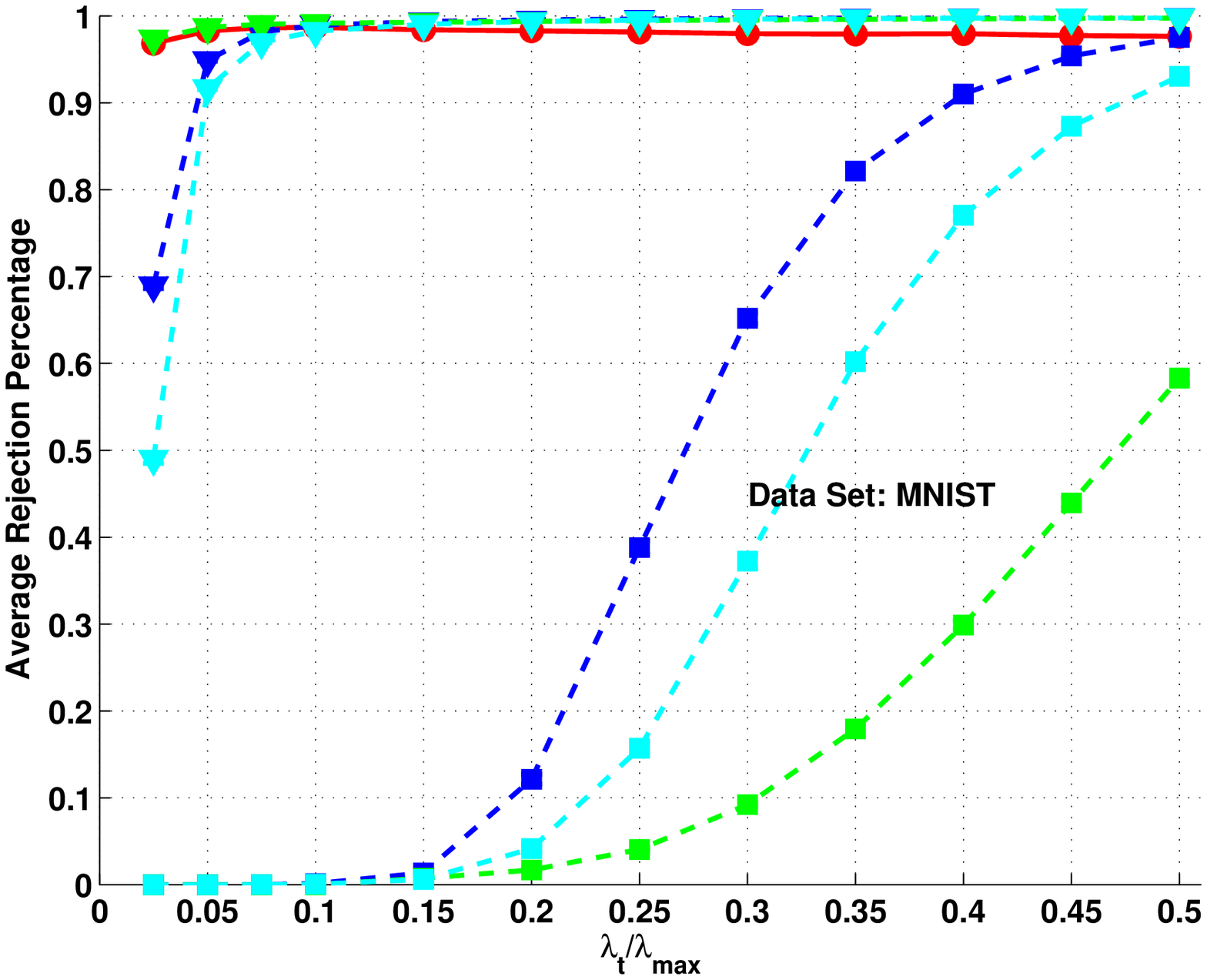}
\includegraphics[width=0.49\textwidth]{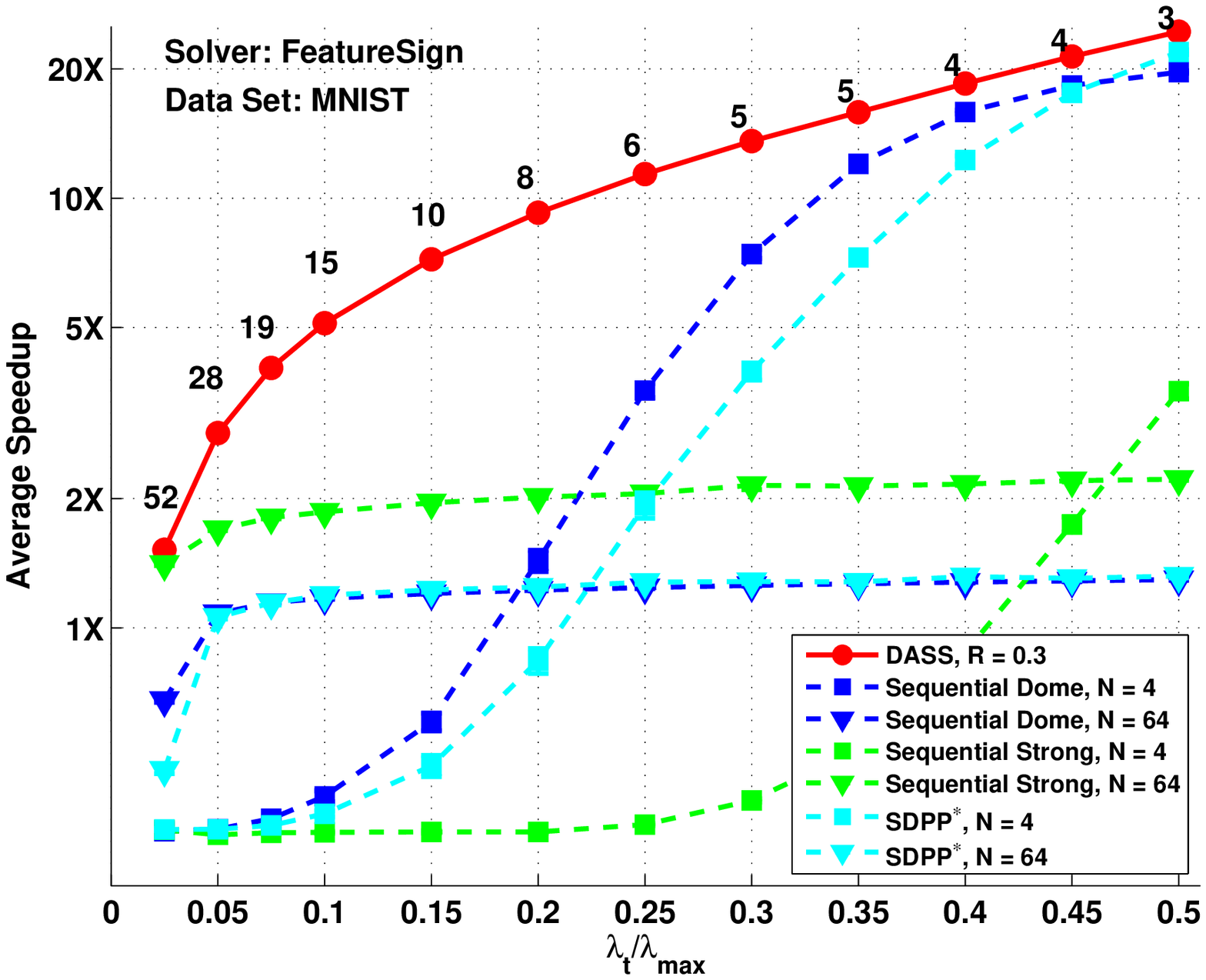}
\caption{\small
{\bf Comparison of DASS with sequential Dome, sequential Strong rule and sequential Enhanced DPP rule on MNIST.}
(Top): Scatter plot of speedup versus rejection percentage for lasso instances ($D, \tv$) with $\lat/\lm=0.1$.
(Bottom): Average rejection and speedup for
$\lat/\lm \in [0.025, 0.5]$. The average $N$ for DASS is shown on the speedup curve.
}
\vspace{-0.5cm}
\label{fig:MNIST}
\end{figure}
\begin{figure}[b!]
\vspace{-0.5cm}
\centering
\includegraphics[width=0.245\textwidth]{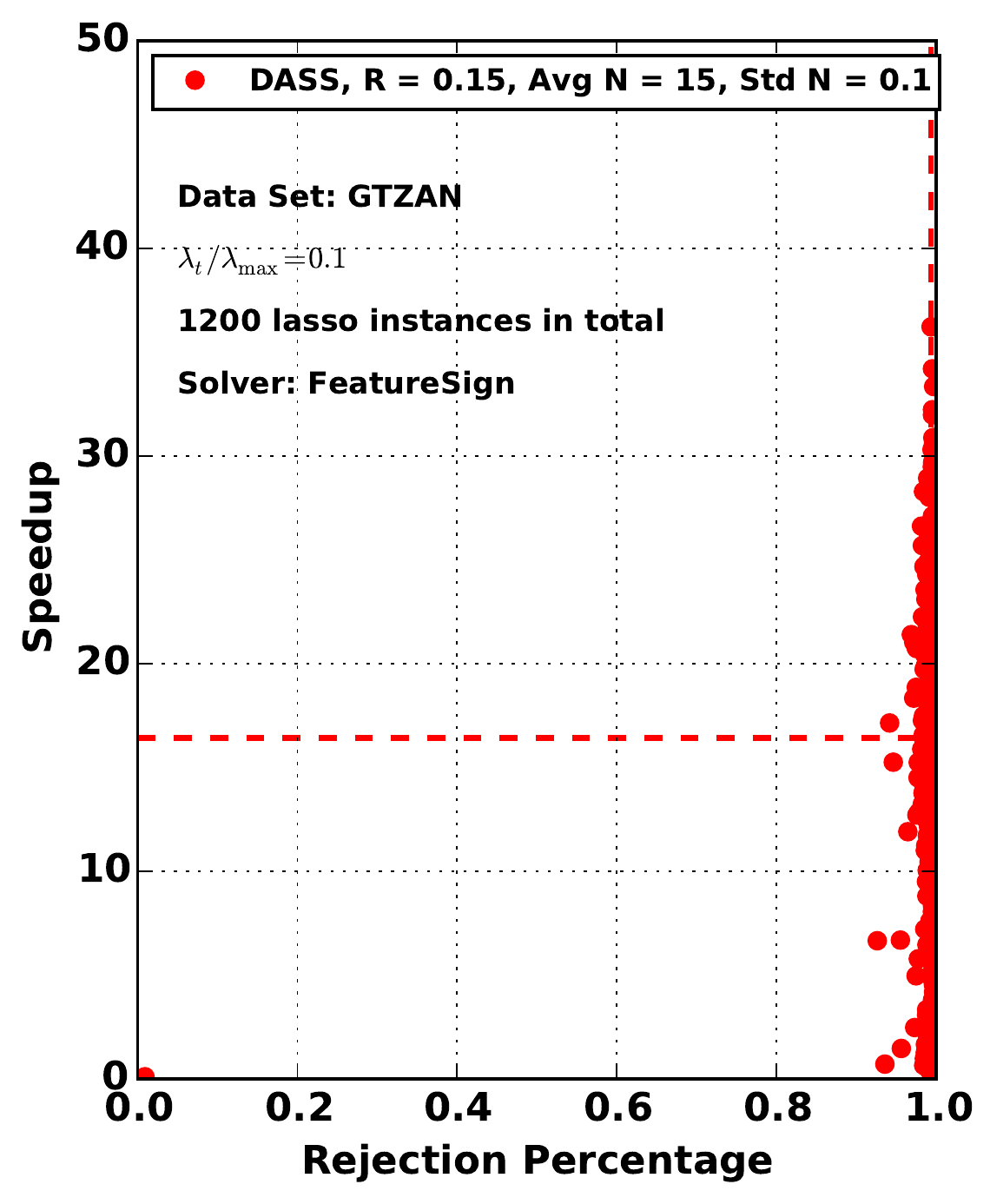}
\includegraphics[width=0.245\textwidth]{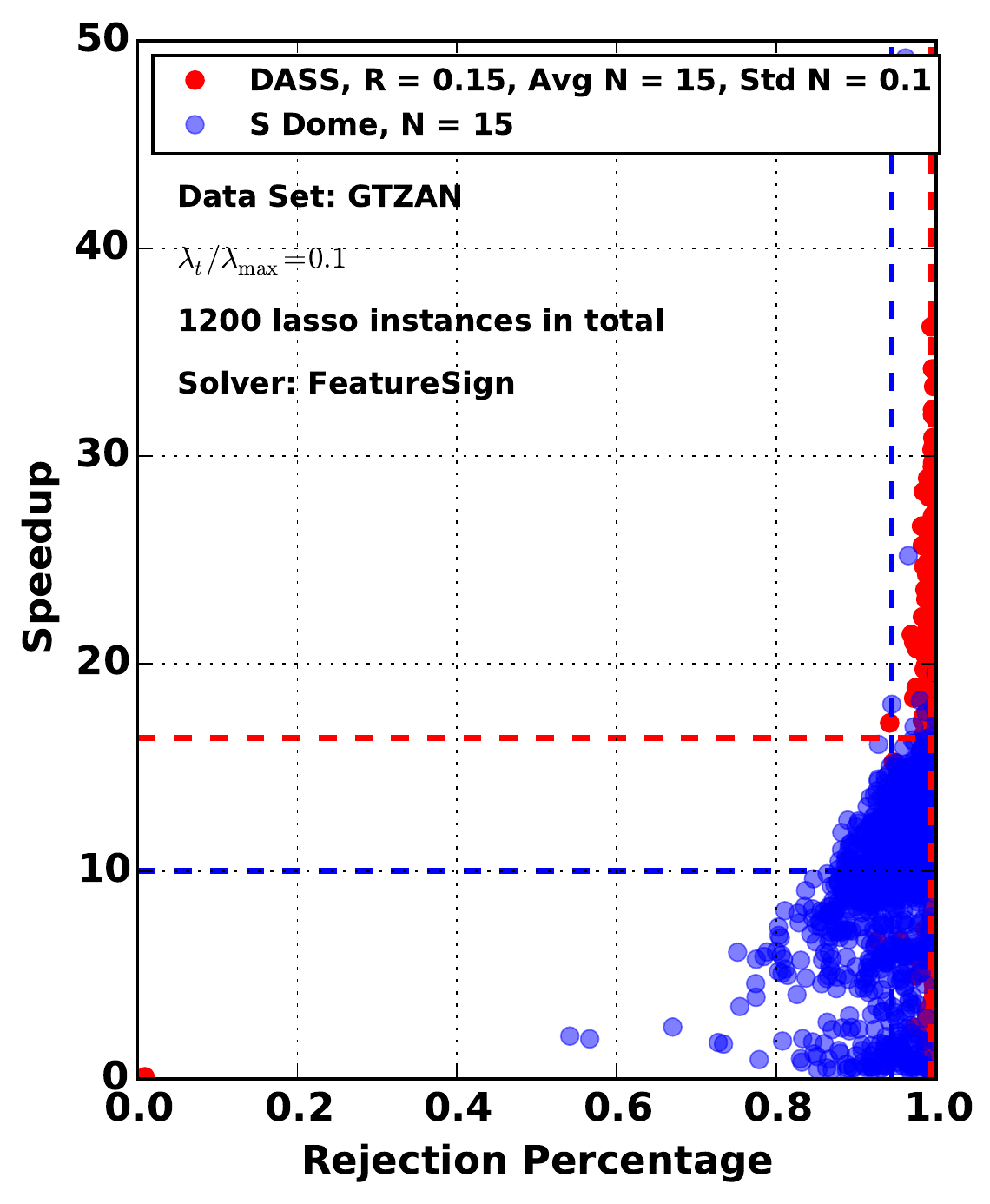}
\includegraphics[width=0.245\textwidth]{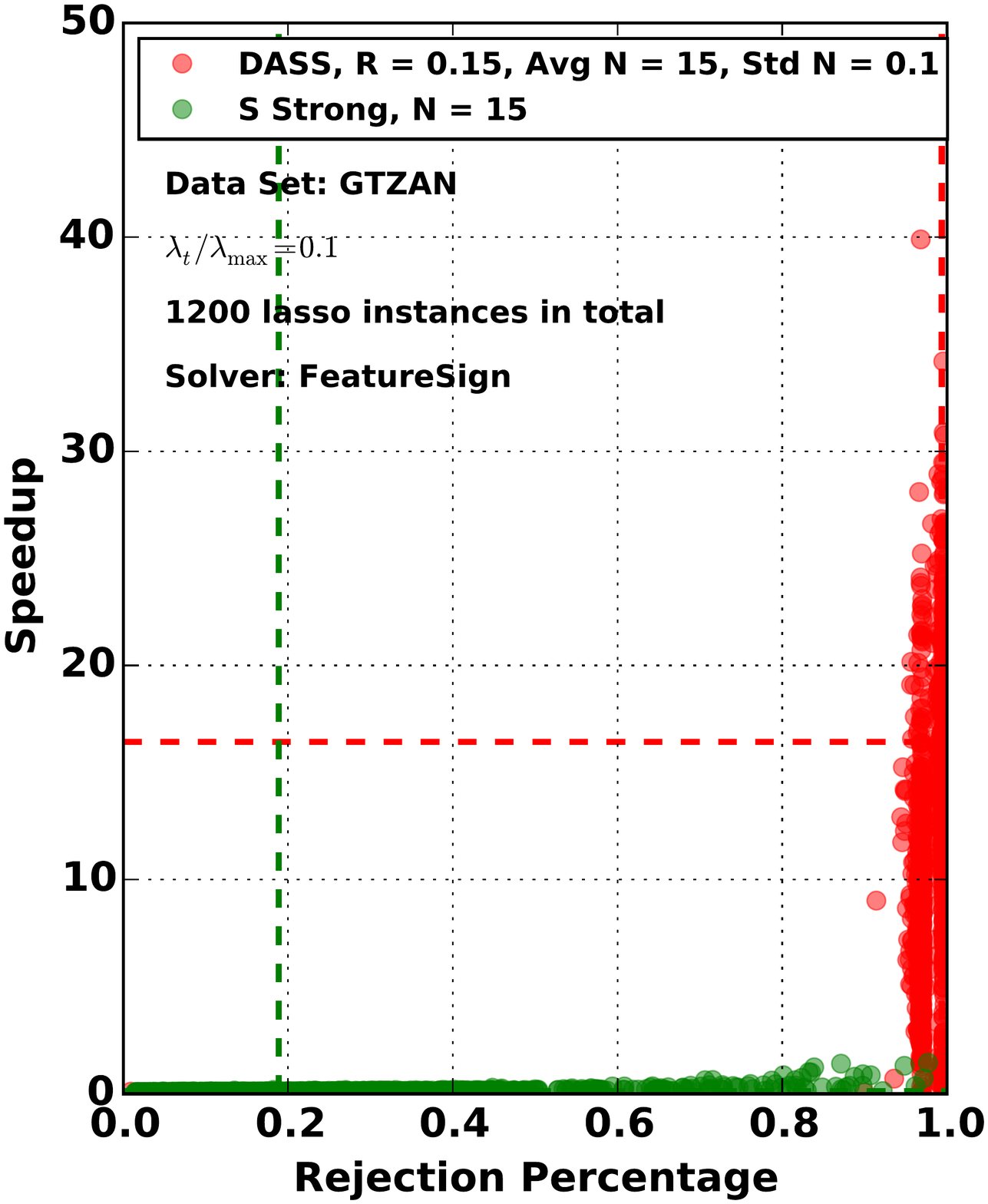}
\includegraphics[width=0.245\textwidth]{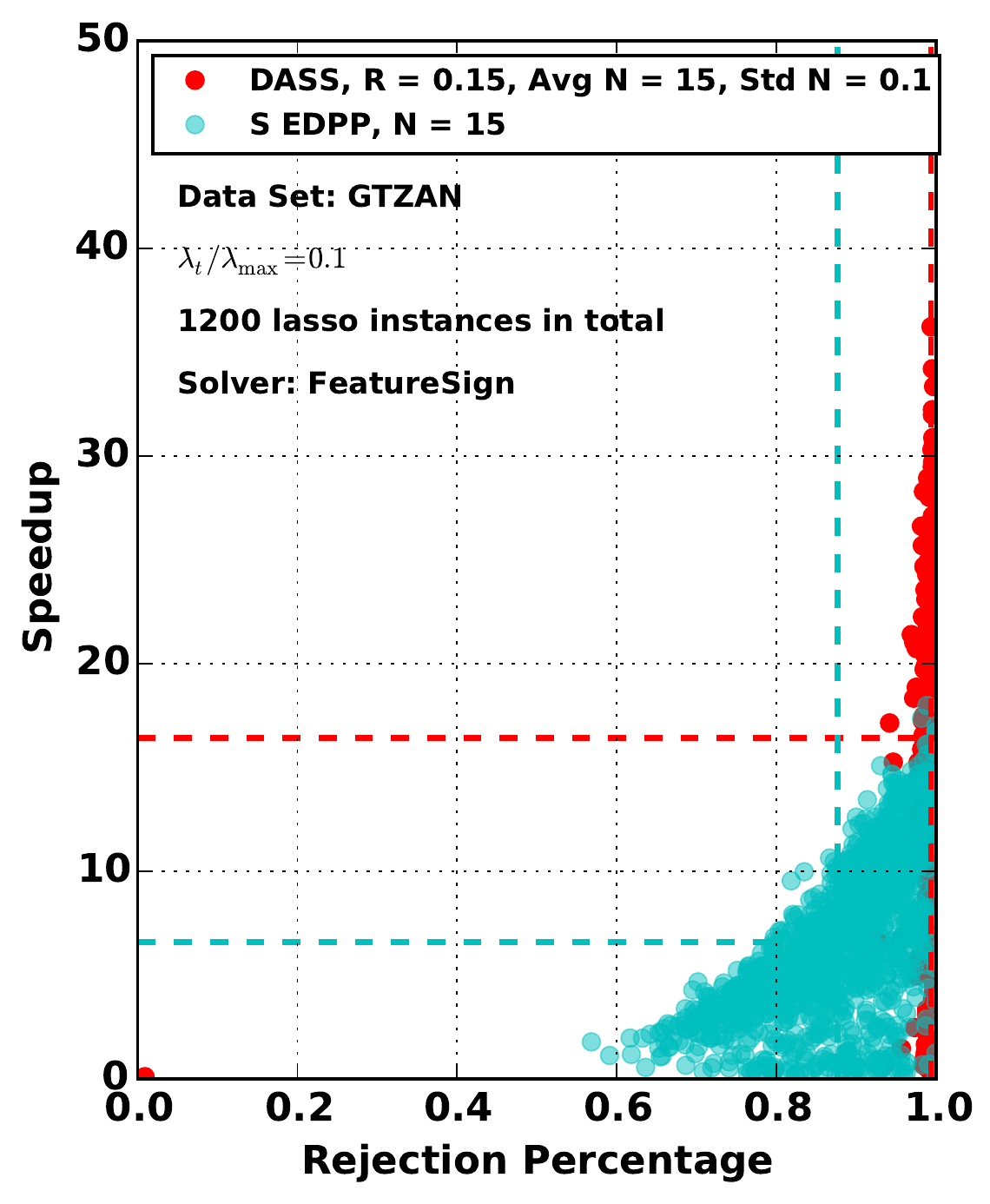}
\\
\includegraphics[width=0.49\textwidth]{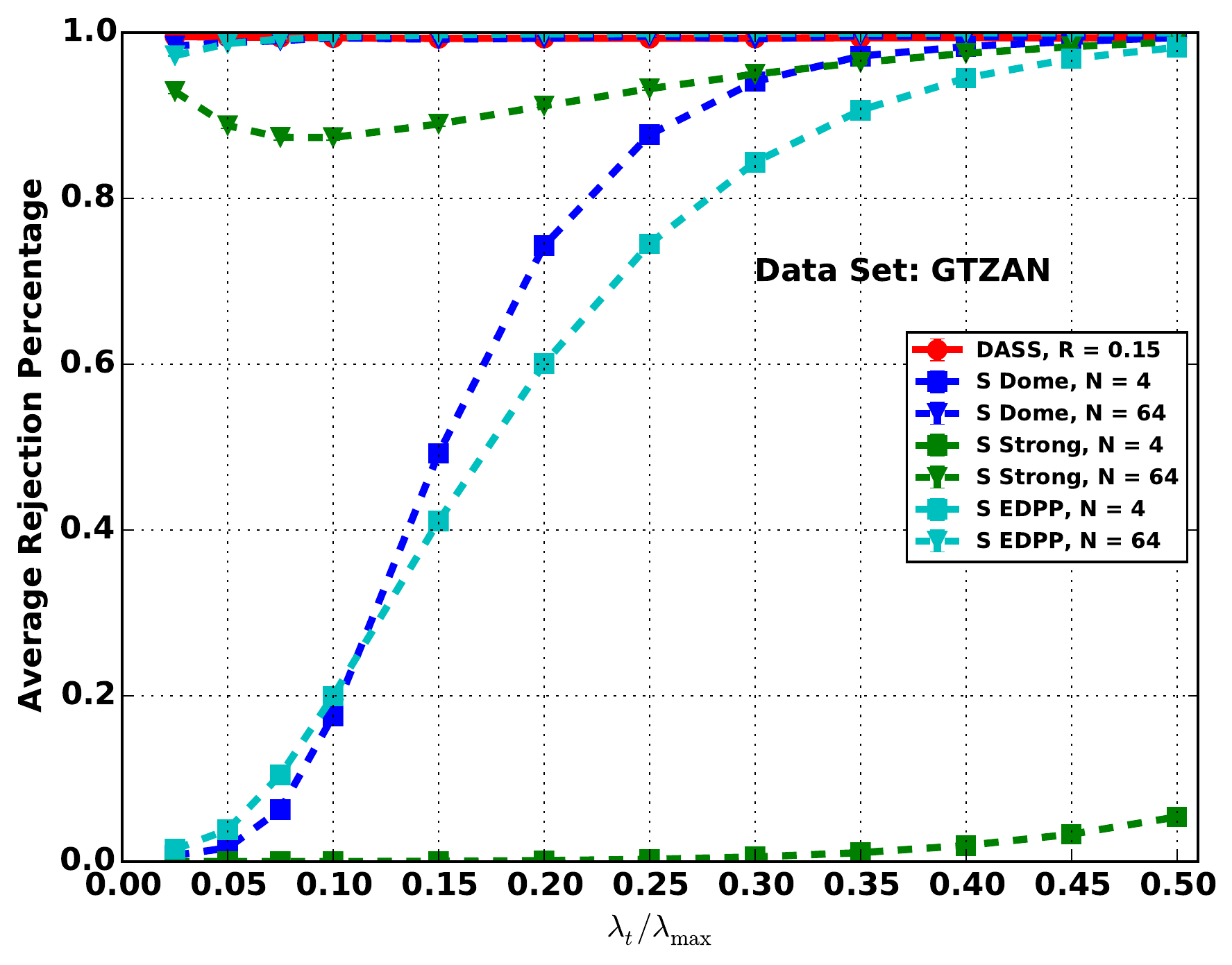}
\includegraphics[width=0.49\textwidth]{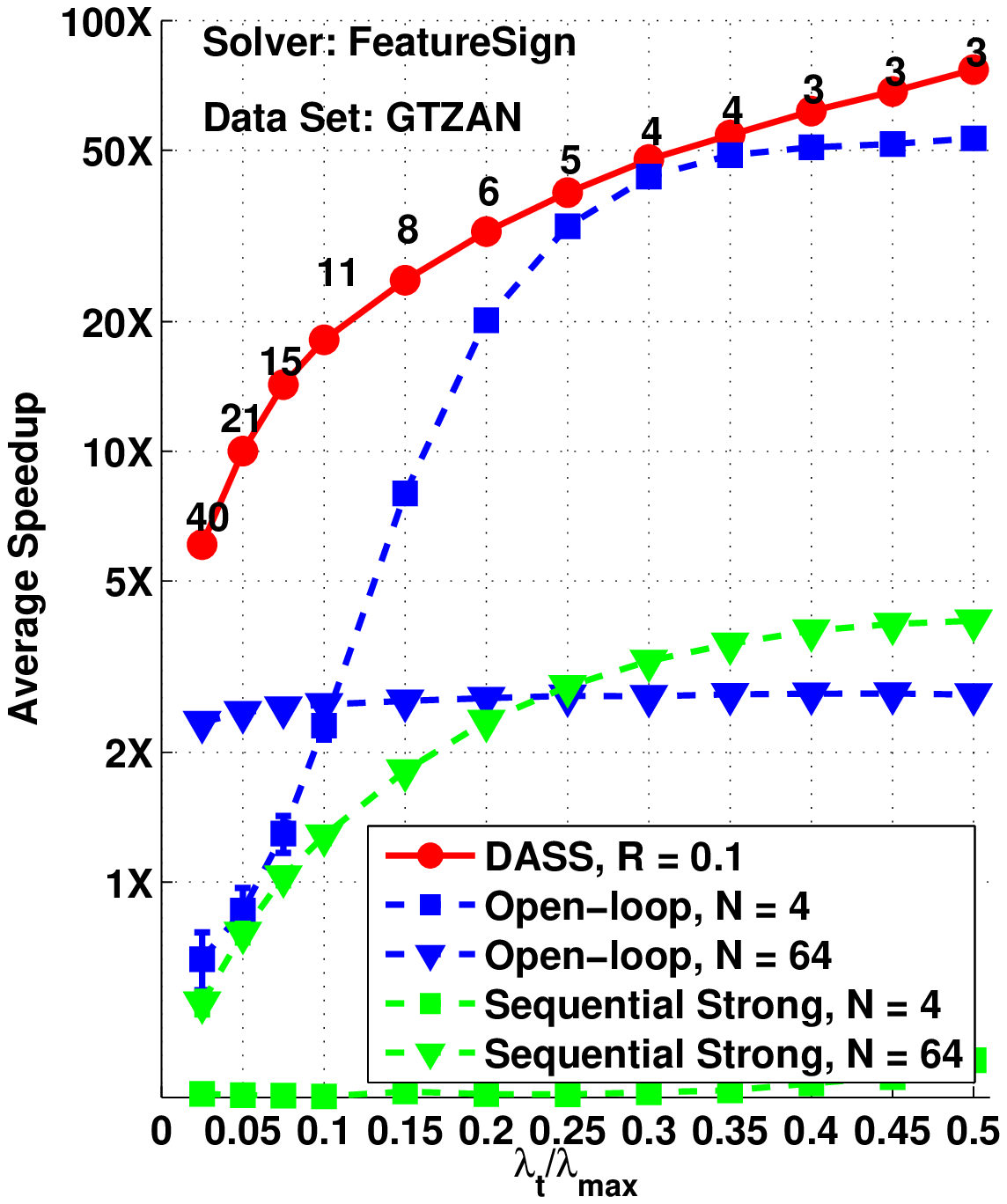}

\caption{
\small {\bf Comparison of DASS with sequential Dome, sequential Strong rule and sequential Enhanced DPP rule on GTZAN.}
(Top): Scatter plot of speedup versus rejection percentage for lasso instances ($D, \tv$) with $\lat/\lm=0.1$.
(Bottom): Average rejection and speedup for $\lat/\lm \in [0.025, 0.5]$. The average $N$ for DASS is shown on the speedup curve.
}
\label{fig:GTZAN}
\vspace{-0.7cm}
\end{figure}
We compare DASS with state-of-the-art open-loop sequential screening schemes with a fixed grid of geometrically spaced regularization parameters. These open-loop sequential screening schemes are sequential Dome \cite{YunWang2013b}, sequential Strong rule \cite{Tibshirani2010Strong} and sequential Enhanced DPP rule \cite{dpp2015}. Sequential Dome is the sequential version of the dome test proposed in \cite{Xiang2012Fast}. The authors of \cite{dpp2015} commented that \textit{``for the DOME  test, it is unclear whether its sequential version exists''} and did not include a comparison of their sequential Enhanced DPP rule with sequential Dome. Their claim is not true and we hereby include both algorithms for comparison. Among these four screening algorithms, only the sequential Strong rule in \cite{Tibshirani2010Strong} may falsely discard features.

\textbf{Performance Metrics}
\\ 1) Rejection Percentage: the ratio of the number of features rejected by screening rules to the number of \emph{all} features.
\\ 2) Speedup: the time to solve \eqref{eq:lasso} at $\lat$ without screening (\emph{one problem}) divided by the total time to screen \emph{plus} solve the reduced lasso problem along the sequence (\emph{N problems}).
\\Note the difference in our metrics definition compared with \cite{dpp2015}. In \cite{dpp2015} rejection percentage was defined as the ratio of the number of features discarded by screening rules
to the actual number of \emph{zero} features in the ground truth; and speedup was defined as the time to solve the sequence of lasso problems without screening (\emph{N problem}) divided by the total time to
screen plus solve the reduced lasso problem along the same sequence (\emph{N problem}). The difference in speedup definition mainly originates from our different motivation and application, as we discussed earlier in \S \ref{sec:intro}.

%
%

\textbf{Datasets}\\
(a) MNIST:
$28\times 28$ hand-written digit images (training set: $60,000$,  testing set: $10,000$) \cite{LeCun1998The-MNIST}. Each image is rearranged into a vector and scaled to unit norm. We sample $5,000$ training images (500 per digit) to form the dictionary. Test images $\tv$ are randomly selected from the testing set.\\
(b) GTZAN:
100 music clips (30 sec, sampled at 22,050 Hz) for each of ten genres of music \cite{Tzanetakis2002}.
Each clip is divided into 3-sec texture windows (TW) with
50\% overlap of adjacent TW. Each TW is then represented using a first order scattering vector of length 199 \cite{Mallat2011}.
We randomly select a dictionary of 12,000
scattering vectors and randomly select test vectors $\tv$
from the remaining 8,000. \\
(c) NYT: 300,000 New York Times articles represented as vectors with respect to a vocabulary of 102,660 words \cite{Frank+Asuncion:2010}. The $i$-th entry in vector $j$ gives the occurrence count of word $i$ in document $j$. After removing documents with very low word counts, 299,752 documents remain. 299,652 of these documents are randomly selected to form the dictionary and from the remaining 100 documents, 65 are selected as test vectors with $0.3 < \lm <0.9$.

For experiments on MNIST and GTZAN we randomly select 20 dictionaries and for each dictionary we use 60 randomly selected test vectors. Reported results are averaged across selected dictionaries and test vectors.
$\lat$ is set using the scaling invariant ratio $\fracn{\lat}{\lm} \in [0,1]$ with higher value yielding sparser solution. Speedup results using the Feature-sign lasso solver \cite{Lee2007Efficient} are shown but our experiment indicate consistent results for a variety of other solvers. For example, see the supplementary material for additional results using the FISTA \cite{AM2009} solver.

In an open-loop sequential screening scheme, once $\lat$ is given and $N$ is selected then the sequence $\{\lambda_k\}_{k=1}^N$ is fixed.
We claim a key attribute of DASS is that it adapts $\{\lambda_k\}_{k=1}^N$ to each individual problem: $(\Dict, \tv, \lat)$.
We test the effectiveness of this adaptation by solving 1200 problem instances with $\lat/\lm=0.1$ for both MNIST and GTZAN.
We then plot the scatter plot of speedup versus rejection percentage for these 1200 problem instances. See the top rows of Fig.~\ref{fig:MNIST} and Fig.~\ref{fig:GTZAN}.
%
For MNIST, the average $N$ used by DASS ($R=0.4$) is $21$ with standard error of $0.6$. DASS successfully pushes both the rejection percentage and speedup distribution to higher ranges compared with open-loop schemes using $N=21$.

DASS also exhibits robust average rejection percentage and speedup as $\lat/\lm$ varies (bottom rows Fig.~\ref{fig:MNIST} and \ref{fig:GTZAN}).
For MNIST, DASS consistently beats or rivals the open-loop schemes as $N$ ranges from $4$ to $64$. Open-loop schemes with $N=64$ are close to DASS in rejection, but its average speedup is only around $2$ or less for most values of $\lat/\lm$. The DASS speedup curve gives a nice outer envelope across the set of open-loop speedup curves.
The average $N$ used by DASS corresponding to each value of $\lat/\lm$ is given on top of the DASS speedup curve (red).
As $\lat/\lm$ decreases, DASS increases $N$ on a case-by-case basis, and this is done without user intervention. Using a properly cross-validated $R$, DASS is able to automatically design an efficient sequence $\{\lambda_k\}_{k=1}^N$ for each individual $(D, \tv, \lat)$.
Similar results can been seen for GTZAN. Since overhead accumulates as $N$ increases, DASS, and open-loop sequential screening schemes, both exhibit less speedup as $\lat/\lm$ decreases.
Nevertheless, DASS pushes efficient and robust screening down to $\lat/\lambda_{\max} = 0.025$, which includes a range useful in many classification applications.

The DASS scheme is easy to implement, and its one parameter
can be selected by cross-validation using one reasonable
value of $\lat/\lm$. The scheme then gives strong screening performance across a wide range of values of $\lat/\lm$. A smaller value of the user-specified parameter $R$, yields a tighter bound on $\diam(\RR^0_k)$, and hence stronger rejection, but also a larger value of  $N$.
Hence speedup will hit a sweet spot as $R$ decreases. Empirical investigation on the effect of $R$ is given in Fig.~\ref{fig:R1}.
\begin{figure}[h]
\centering
\includegraphics[width=0.315\textwidth]{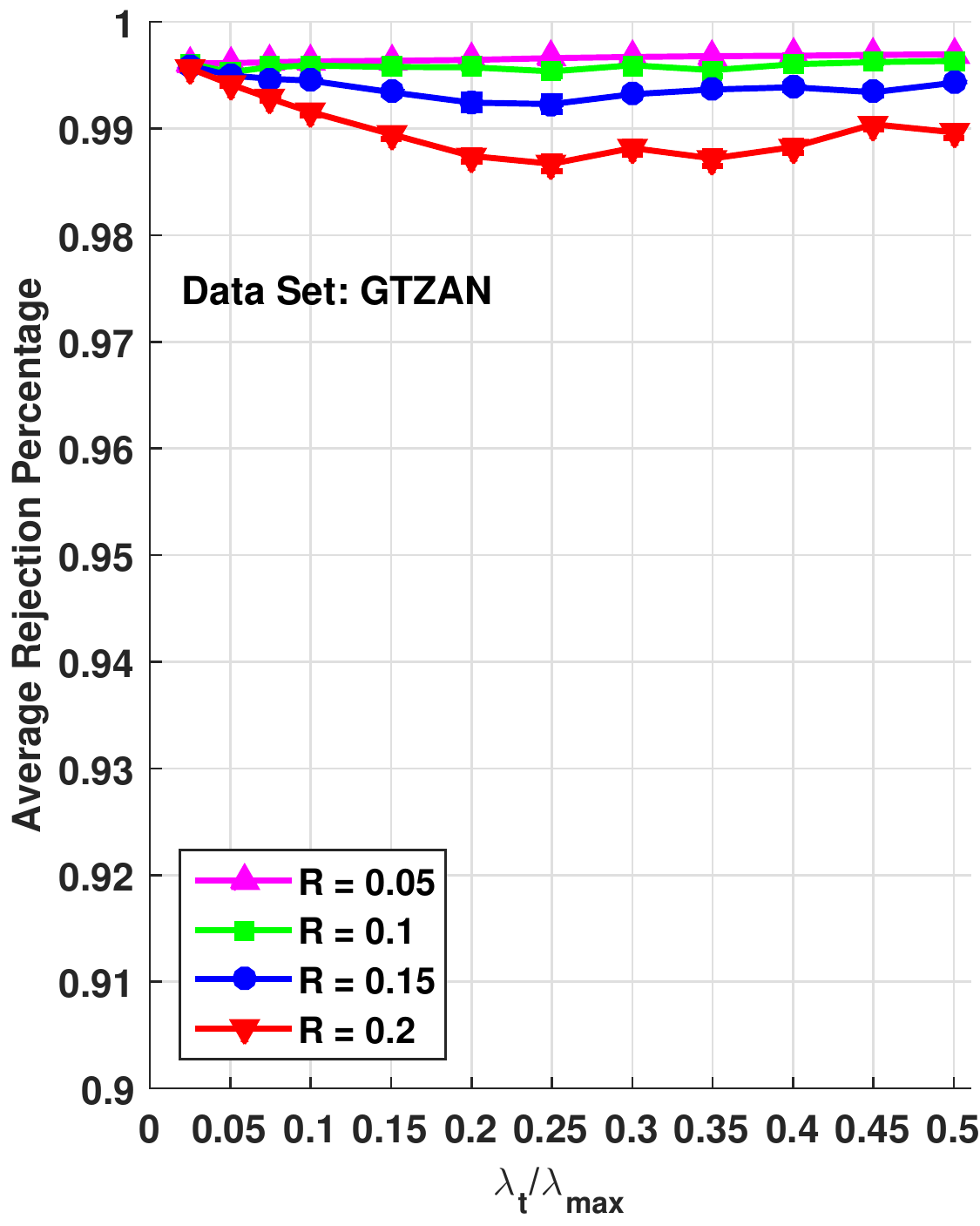}
\includegraphics[width=0.315\textwidth]{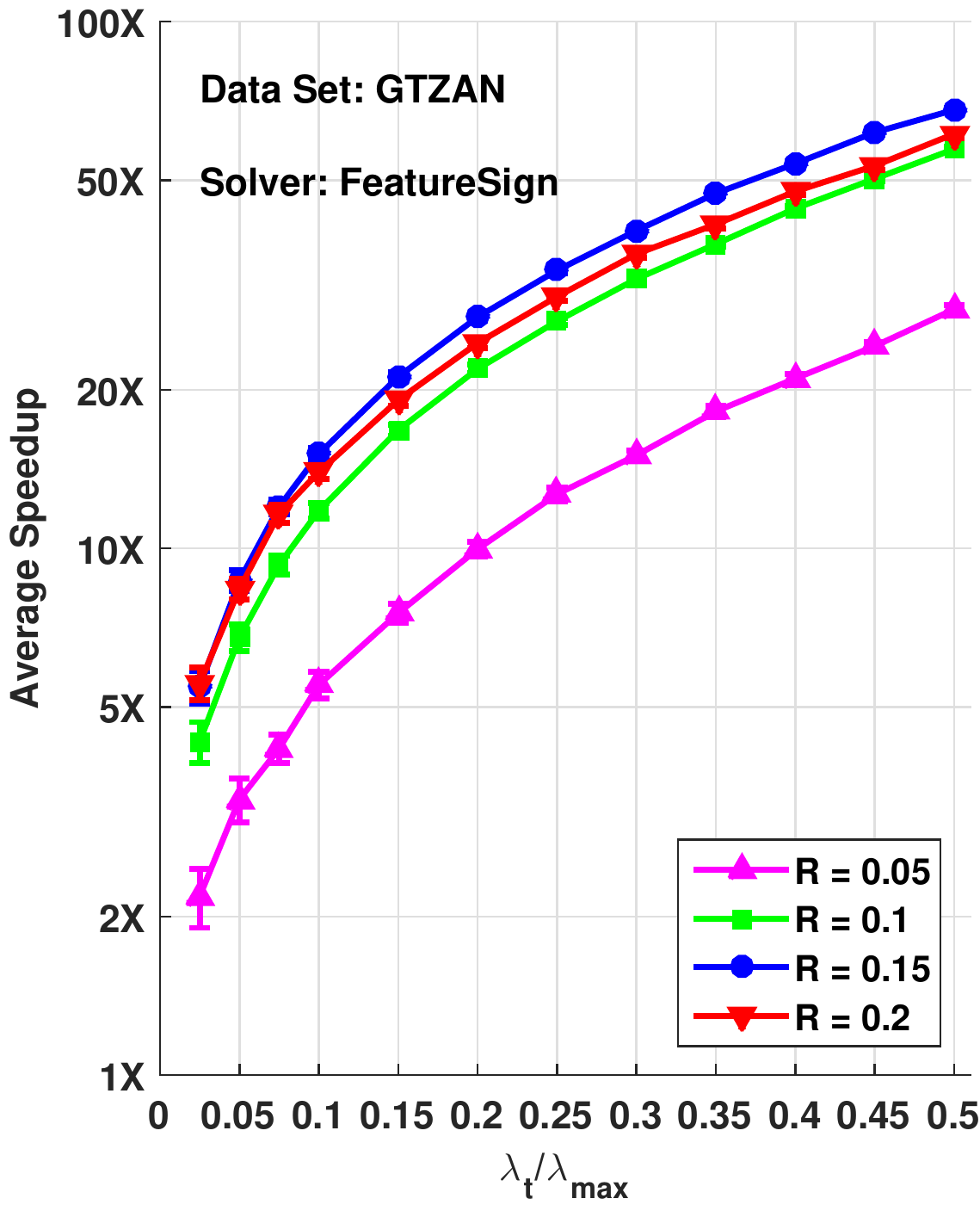}
\includegraphics[width=0.315\textwidth]{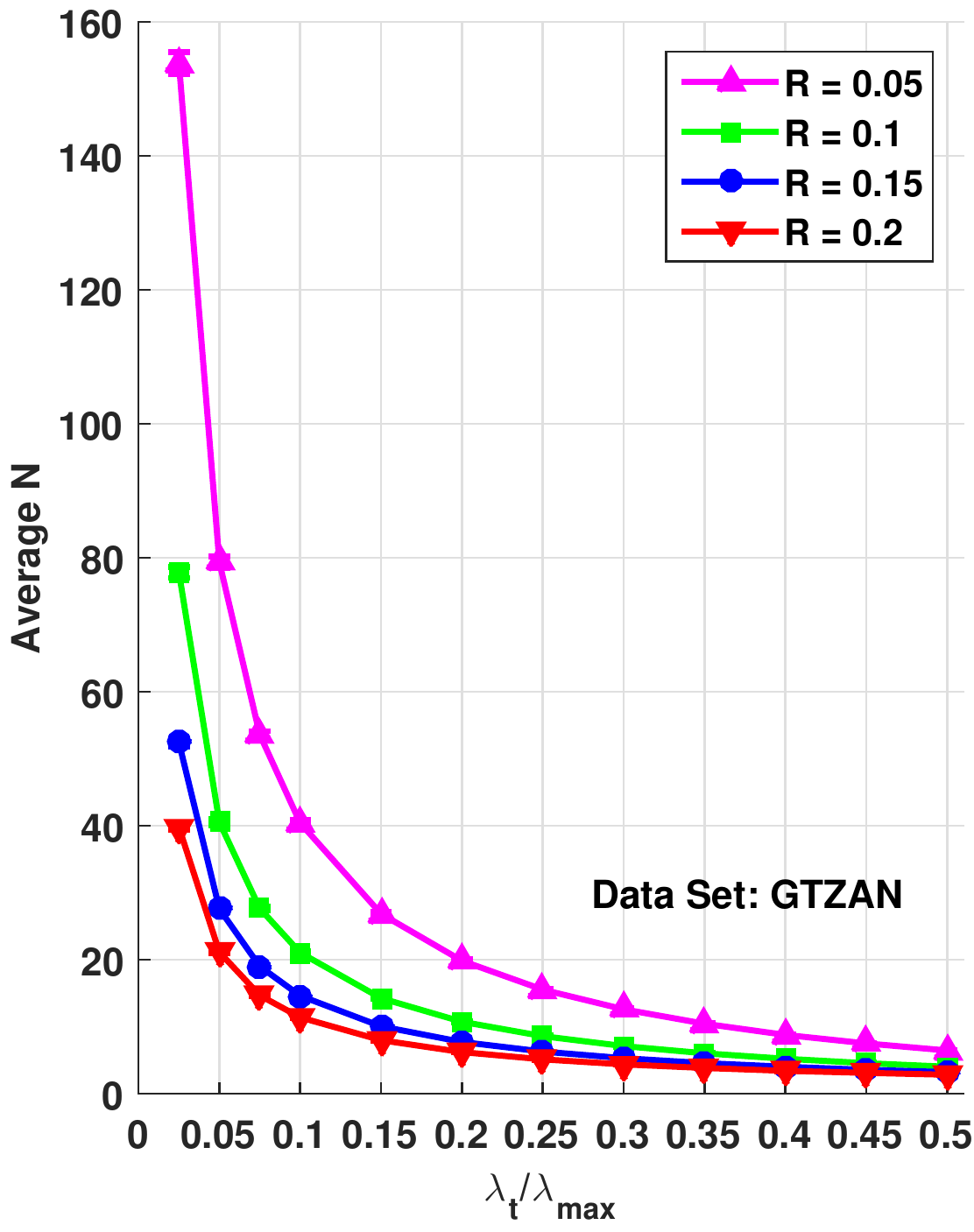}

\caption{
\small {\bf Effects of the choice of $R$ on DASS performance.}
(Left): Average rejection percentage.
(Middle): Average speedup.
(Right): The average value of $N$ over the $1200$ trials.
}
\label{fig:R1}
\end{figure}

To examine the effect of ``noisy'' lasso solutions,
we add noise $\vn_k \sim N(0,\alpha^2 I)$ to our computed $\wvo_k$, $k=1\mc N$, for various values of noise-to-signal ratio (the ratio of the power of $\vn_k$ to that
of $\wvo_k$). We then hard threshold $\hat{\wv}_k+\vn_k$ to yield a ``corrupted'' sparse solution. The impact on rejection fraction, speedup, and classification accuracy is shown for MNIST in Fig.~\ref{fig:error}. Rejection remains around $0.9$ until the noise-to-signal ratio goes above $10^{-3}$.
Speedup drops from around $5X$ to $1X$ at $10^{-6}$.
So DASS remains effective for noise-to-signal ratio below $10^{-6}$.
Within this bound, there is a small penalty for the added corruption, but
DASS is robust to the errors. In addition,
when $\tilde{\wv}_N$ is used for Sparse Representation Classification \cite{Wright2009Robust}, classification accuracy is also robust to the error in $\wvo_N$.
\begin{figure}[t!]
\centering
\includegraphics[width=0.3\textwidth]{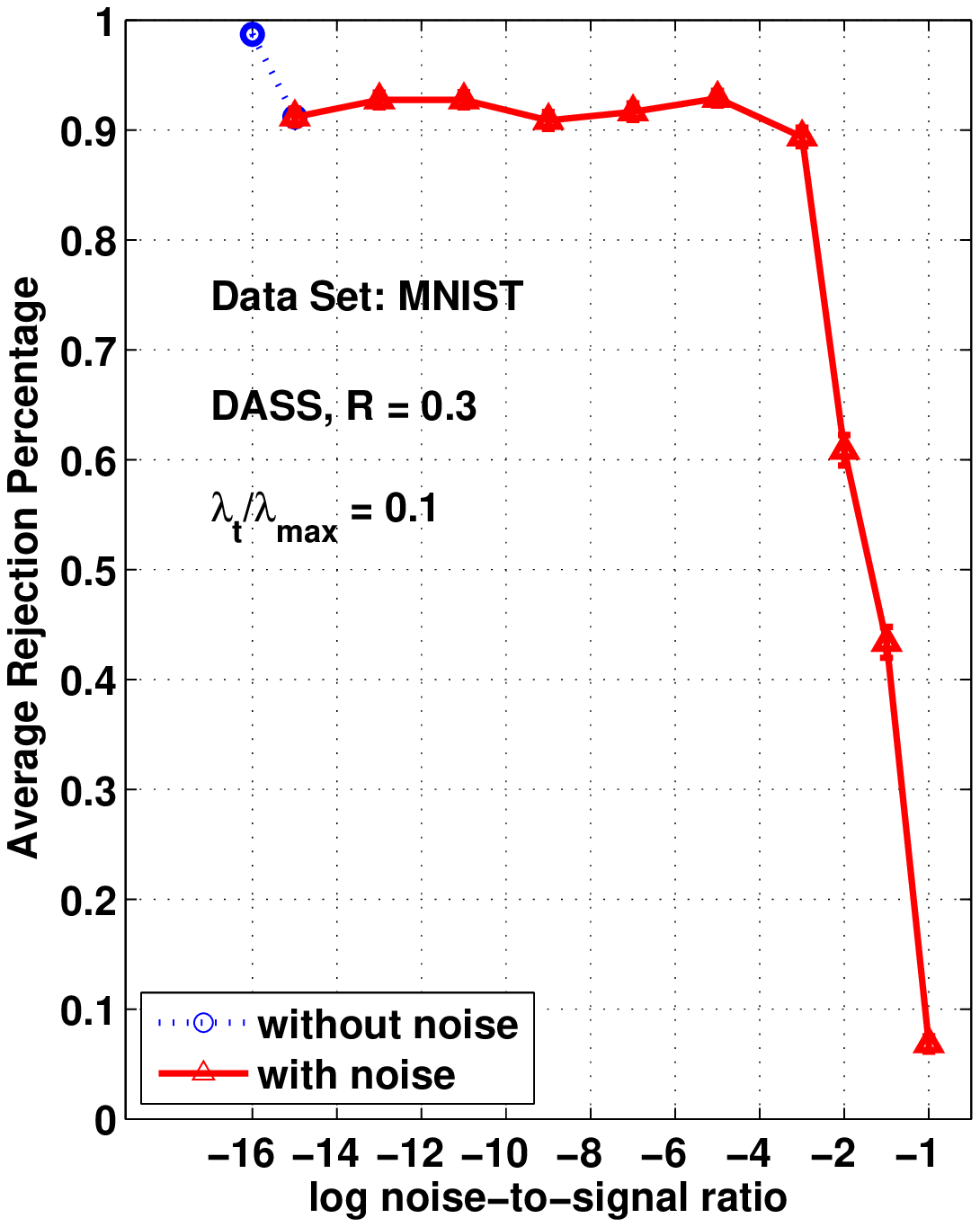}
\hfil
\includegraphics[width=0.29\textwidth]{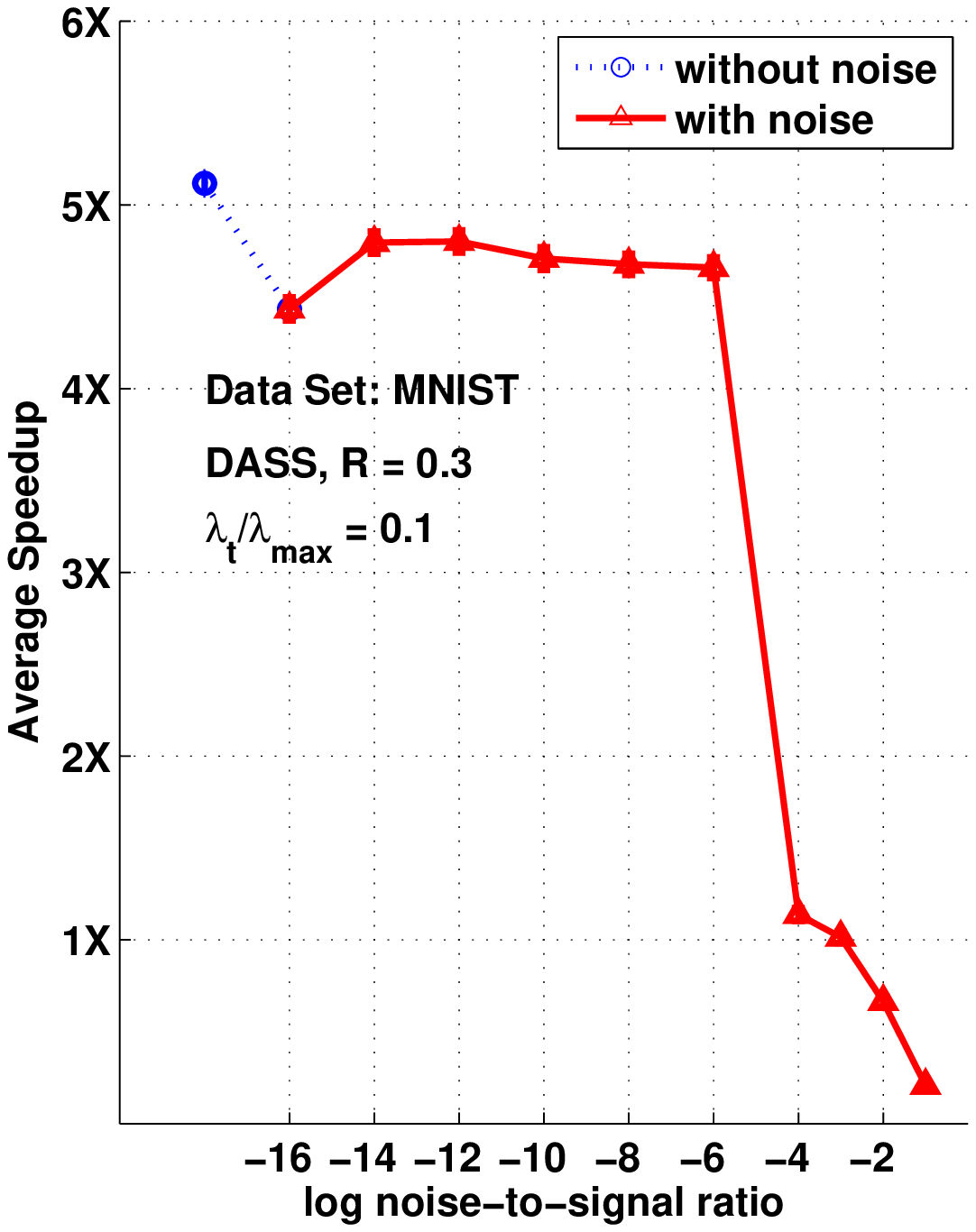}
\hfil
\includegraphics[width=0.3\textwidth]{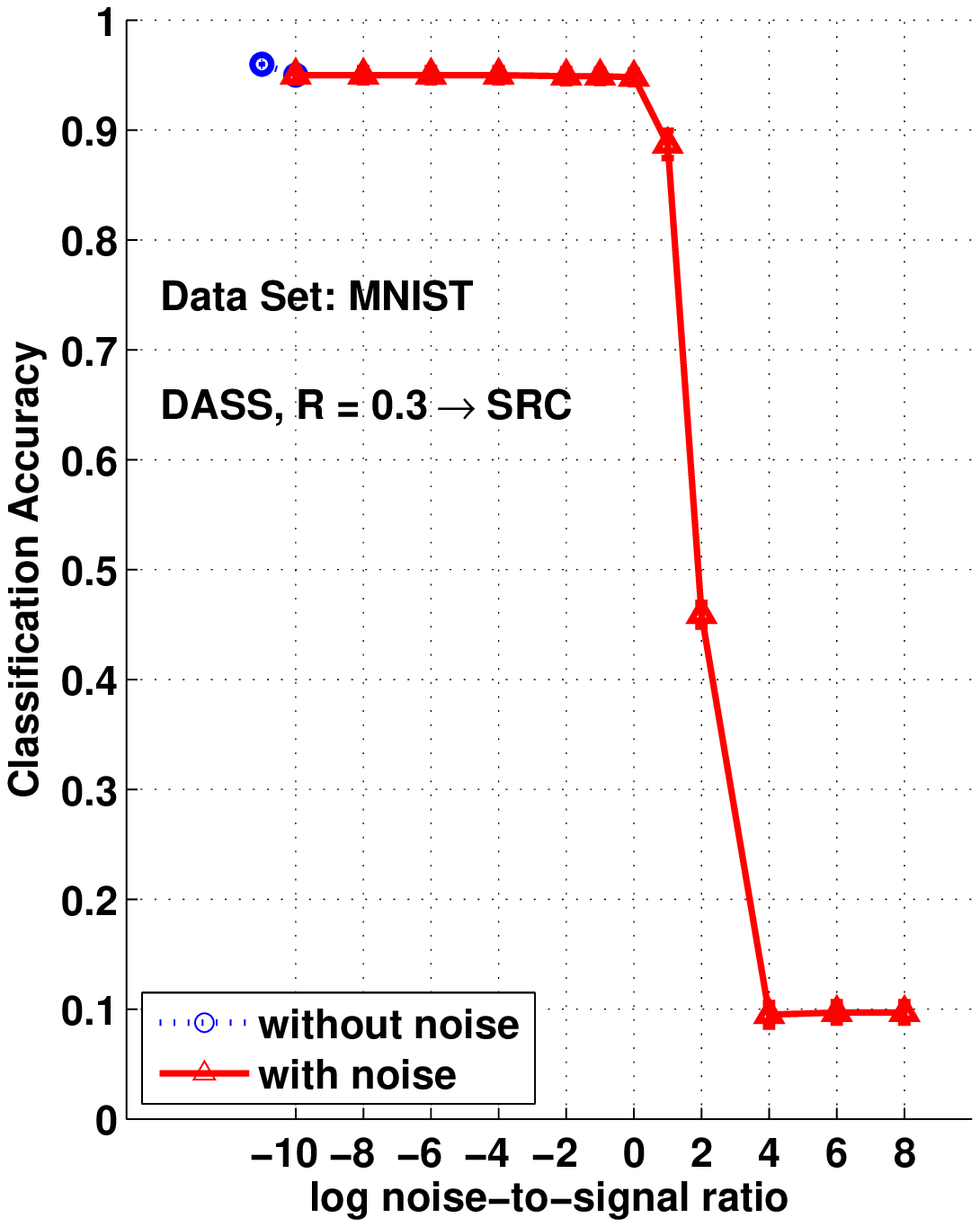}
\caption{\small
{\bf Effects of adding noise to $\wvo_k$ in DASS.}
}
\label{fig:error}
\end{figure}

Finally, we use the NYT dataset to explore problems with many high dimensional features.
From the test vector pool of $100$ documents, we select $65$ documents, subject to $0.3 <\lm <0.9$. $\lat/\lm$ is set at $0.1$.
Since the dictionary cannot fit into our computer memory,
we run screening by loading small segments of the dictionary into memory at a time. We test DASS with $R=0.3$.
The average resulting values of $N$ from $65$ problem instances is $45$ with standard error being $1.8$.
For comparison, we run sequential Dome, sequential Strong rule and sequential Enhanced DPP rule with $N=65$.

We allocate the same amount of system resources to each algorithm and each problem instance. Sequential Dome and sequential EDPP are not able to complete all $65$ problem instances because for some instances they do not reject enough features in the process and run into out of memory error. We thus include the completion rate metric in Fig.~\ref{fig:NYT} defined as the percentage of problem instances that are completed given the same system resources. DASS and sequential Strong rule take the lead in this metric and are able to complete all problem instances. For problem instances that are completed by each algorithm, we provide a scatter plot of total running time versus number of remaining features in Fig.~\ref{fig:NYT}.
No speedup information is shown since we cannot solve these problems without screening. The results show that DASS is on a par with sequential Strong rule and they significantly outperform sequential Dome and sequential EDPP rules: sequential Strong rule is able to reject more but DASS is able to finish in less runtime. Note however that the Strong rule may commit false rejections.

\begin{figure}[t!]
\centering
\includegraphics[width=0.6\textwidth]{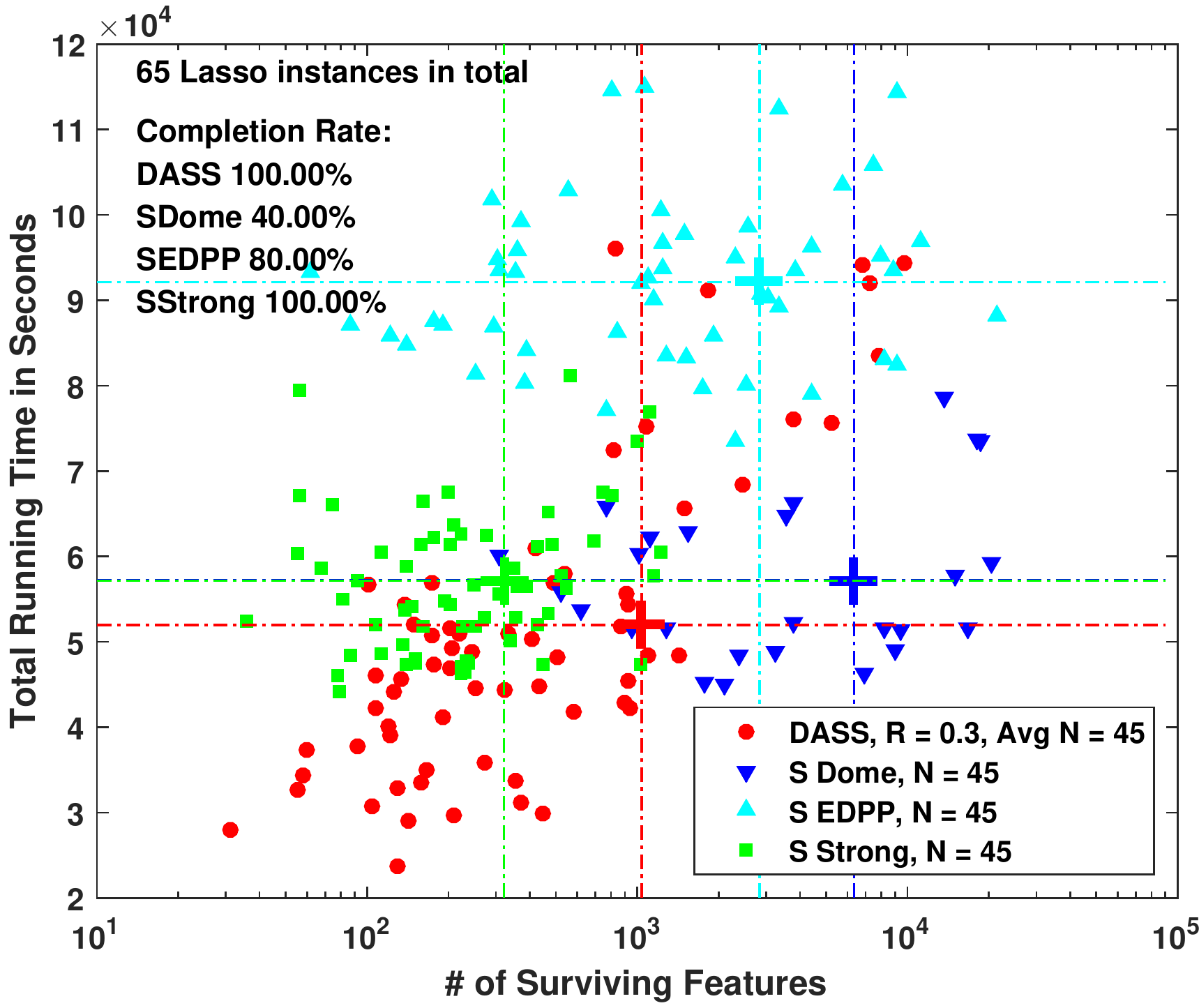}
\caption{\small
{\bf Comparison of DASS with sequential Dome, sequential Strong rule and sequential Enhanced DPP rule on NYT. } Both DASS and three open-loop sequential screening schemes are applied to $65$ lasso instances with $0.3<\lm<0.9$ and $\lat/\lm = 0.1$. The colored $+$ sign marks the average performance for each algorithm.
}
\vspace{-0.6cm}
\label{fig:NYT}
\end{figure}

\section{Conclusion}\label{sec:conclude}
We have shown analytically and empirically that
feedback controlled sequential screening yields improved
performance. Given a cross-validated target $\lat$, the DASS rule tailors an individualized sequence $\{\lambda_k\}_{k=1}^N$ on the fly which is optimized for each particular problem instance $(\Dict, \tv, \lat)$. This has demonstrated significantly greater robustness with respect to the inherent variability in the lasso problem.

Our feedback idea extends to the DPP bound used in \cite{dpp2015}.
The selection rule $\lambda_k =(1/2R+\lambda^{-1}_{k-1})^{-1}$ yields a feedback controlled version of sequential DPP \cite{dpp2015}. This uses a looser bound than $\RR_k^0$ (see Fig.~\ref{fig:boundonth}).
%
Our results also suggest that it may be possible to use
feedback to control sequential screening in more targeted ways.
For example, by varying the parameter  $R$ in response to previously
solved instances it may be possible to regulate the size of the dictionary after screening.
This would be useful in situations of a large number of features and limited  memory.

{
\small
\bibliographystyle{plain}
\bibliography{Lassobib,books,Screening2014,musicbib}
}

\newpage   
\section*{Supplementary Material}

\subsection*{Proof of Lemma \ref{lem:diamR}}
\begin{proof}
We want to find the solution to
$\min_{\vth_1,\vth_2} -\nhalf \|\vth_1-\vth_2\|_2^2$ subject to:
$\|\vth_k-\vq\|_2^2\leq r^2$ and $\vn^T\vth_k\leq c$, $k=1,2$.
Form the Lagrangian
$$
\textstyle
L=-\nhalf \|\vth_1-\vth_2\|_2^2
+ \sum_{k=1}^2 \nhalf \lambda_k(\|\vth_k-\vq\|_2^2-r^2)
+ \sum_{k=1}^2 \mu_k(\vn^T\vth_k-c)
$$
The objective function and inequality constraint functions are all convex $C^1$ functions, and by the assumption of Lemma \ref{lem:diamR} there exist strictly feasible points. Hence the KKT conditions are necessary and sufficient for optimality \cite[P. 244]{Boyd2004Convex}. Thus for $\vth_1,\vth_2$ to be a solution, it is sufficient that these points satisfy the four constraints,
the complementary slackness conditions
$\lambda_k(\|\vth_k-\vq\|_2^2-r^2)=0$ and
$\mu_k(\vn^T\vth_k-c) =0$, $k=1, 2$, and that
$(dL/d\vth_1)(\vth_1)=(dL/d\vth_2)(\vth_2)=0$.

We now construct a candidate solution.
Suppose $\vq$ is not contained in $R$.
Let $\vz$ denote the orthogonal projection of
$\vq$ onto the hyperplane $\vn^T\vth=c$.
Since $\vq\notin R$, $\vn^T\vq>c$
and $\vz=\vq-\fr r \vn$ for some $0<\fr\leq 1$.
Since $\vz$ lies on the plane, we must have
$c=\vn^T\vz=\vn^T\vq-\fr r$.
Hence
$$
\fr= (\vn^T\vq-c)/r.
$$
Pick any unit norm vector $\vu$ with $\vn^T\vu=0$.
Then set
\begin{equation}\label{eq:defvtk}
\vth_1=\vz+r\sqrt{1-\fr^2} \vu
\qquad
\vth_2=\vz-r\sqrt{1-\fr^2} \vu.
\end{equation}
For these points
$\|\vth_k-\vq\|_2^2
= \|-r\fr\vn \pm r\sqrt{1-\fr^2} \vu\|_2^2
=  r^2\fr^2 + r^2(1-\fr^2) = r^2
$
and
$
\vn^T \vth_k = \vn^T(\vz \pm \fr r \vu) =c
$.
So both points satisfy the inequality constraints
with equality and hence also satisfy the complementary
slackness conditions.
The final two KKT conditions require
\begin{align}
D_{\vth_1} L (\vth_1)
&= -(\vth_1-\vth_2) +\lambda_1(\vth_1-\vq) +\mu_1\vn =0
\label{eq:D1L}\\
D_{\vth_2} L (\vth_2)
&= -(\vth_2-\vth_1) +\lambda_2(\vth_2-\vq) +\mu_2\vn =0
\label{eq:D2L}
\end{align}
Consider equation \eqref{eq:D1L}.
Substituting $\vth_1, \vth_2$ from \eqref{eq:defvtk}
yields
$$
r\sqrt{1-\fr^2}(\lambda_1-2) \vu + (\mu_1-\lambda_1\fr r)\vn=0.
$$
This can be satisfied by selecting $\lambda_1=2$
and $\mu_1=2\fr r$. A similar argument for \eqref{eq:D2L}
shows that it is satisfied by the selection
$\lambda_2=2$ and $\mu_2=2\fr r$.
Thus the candidate solution is indeed a solution.
So if $\vq\notin\RR$, we have
$\diam(\RR)= \|2r\sqrt{1-\fr^2} \vu\|_2=2\sqrt{r^2-(\vn^T\vq-c)^2}.$

If $\vq\in \RR$, then for the candidate solution we take
$\vth_1=\vq+r\vu$ and $\vth_2=\vq-r\vu$. In this case, the candidate points satisfy the spherical bound with equality.
The half space constraint is also satisfied:
$\vn^T\vth_k= \vn^T\vq \pm r\vn^T\vu \leq c$, but generally only with inequality. So the candidate solution satisfies
all of the constraints and the first two complementary slackness conditions.
In this case, substitution of the expressions
for $\vth_1, \vth_2$ into \eqref{eq:D1L} yields
$$
r(\lambda_1-2) \vu + \mu_1\vn=0
$$
This can be satisfied by selecting $\lambda_1=2$
and $\mu_1=0$. This also satisfies the corresponding
complementary slackness condition. Similarly for \eqref{eq:D2L}.
So when $\vq\in \R$, $\diam(\RR)=\|2r\vu\|_2=2r$.
\end{proof}

\subsection*{Proof of Lemma \ref{lem:q}}
\begin{proof}
$\RR^0_k=S(\vq_k,r_k)\cap H_k$.
1) For $k=1$, $\vq_1 = \tv/\lambda_1, r_1=\|\tv/\lambda_1 - \tv/\lm\|_2$, $H_{1}=\{\vth\colon\at_*^T\vth\leq 1\}$. Since $\lambda_1 < \lm$, $\vq_1=\tv/\lambda_1$ is outside the dual feasible set $\FS$ and outside the half space $H_1$. Thus $\vq_1$ is outside $\RR^0_1$.
2) For $k=2, \ldots, N$, by construction $\RR^0_k \subset H_k=\{\vth: \vn_{k-1}^T \vth \leq c_{k-1} \}$, with $\vn_{k-1}$ and $c_{k-1}$ given by equation \eqref{eq:nck-1}. $\vq_{k-1}=\tv/\lambda_{k-1}\notin \FS$, since $\lambda_{k-1}<\lambda_1< \lm$. Since $\{\vth:\vn_{k-1}^T \vq_{k-1} =c_{k-1}\}$ is a separating hyperplane of $\vq_{k-1}$ from $\FS$, $\vn_{k-1}^T \vq_{k-1} >c_{k-1}$. Then $\vn_{k-1}^T \vq_k = \vn_{k-1}^T \vq_{k-1}*(\lambda_{k-1}/\lambda_k) >c_{k-1}$. Thus $\vq_k \notin \RR^0_k$.
\end{proof}

\subsection*{Proof of Proposition \ref{pro:diamRRk}}
\begin{proof}
By Lemma \ref{lem:diamR},
$\diam(\RR^0_k)=2\sqrt{r_k^2-(\vn_{k-1}^T\vq_k-c_{k-1})^2}$.
Set $s_k=1/\lambda_k$.
Then,
$$
r_k^2 =\|s_k\tv-\vtho_{k-1}\|_2^2
= \|(s_k-s_{k-1})\tv +s_{k-1}\tv-\vtho_{k-1}\|_2^2
= \|\Delta_k \tv +\beta_{k-1}\vn_{k-1}\|_2^2,
$$
where
$\Delta_k=s_k-s_{k-1}$ and
$\beta_{k-1}$ is a scalar such that
$\beta_{k-1}\vn_{k-1}= s_{k-1}\tv-\vtho_{k-1}.$
Hence
$$
r_k^2 = \Delta_k^2 \tv^T\tv +2\beta_{k-1} \Delta_k\tv^T\vn_{k-1}
+ \beta_{k-1}^2 .
$$
For the second term we have
\begin{align*}
(\vn_{k-1}^T\vq_k-c_{k-1})^2
&= (\vn_{k-1}^T(s_{k-1}\tv +s_k\tv -s_{k-1}\tv) -c_{k-1} )^2\\
&= (\vn_{k-1}^T(\Delta_k \tv +\beta_{k-1} \vn_{k-1}))^2\\
&=\Delta_k^2 \tv^T\vn_{k-1}\vn_{k-1}^T\tv
	+2\beta_{k-1}\Delta_k \tv^T\vn_{k-1} +\beta_{k-1}^2 .
\end{align*}
Thus
$\diam(\RR^0_k)= 2\Delta_k \sqrt{\tv^T(I-\vn_{k-1}\vn_{k-1}^T)\tv}$.
\end{proof}

\subsection*{Proof of Theorem \ref{thm:smlard}}

\begin{proof}
Under the DASS feedback rule,
\begin{equation}\label{eq:proofT1}
\textstyle
s_k=s_1+\nhalf R \sum_{j=2}^k
(\tv^T(I-\vn_{j-1}\vn_{j-1}^T)\tv)^{-\nhalf}
\geq s_1 +(k-1)\fracn{R}{(2\|\tv\|_2)} \ .
\end{equation}
Hence $s_1 +(k-1)\fracn{\nhalf R}{\|\tv\|_2} \geq 1/\lat$
is sufficient for termination,
from which we deduce
$N\leq 1+(\fracn{2\|\tv\|_2}{R})
(\fracn{1}{\lat}
-\fracn{1}{\lambda_1})$.
This bound is very loose.
Proposition \ref{pro:diamRRk} and the feedback rule \eqref{eq:updater} give $\diam(\RR_k)\leq R$.

The stronger bound on $N$ requires the follow more detailed argument.
%
%
Let the decreasing sequence $\{\lambda_k\}_{k=1}^N$ be
determined by \eqref{eq:updater} and $\alpha_k=\lambda_k/\lambda_{k-1}$.
Then
$$
\frac{1}{\lambda_k}-\frac{1}{\lambda_{k-1}}
=(\frac{1}{\alpha_k} -1)\frac{1}{\lambda_{k-1}}
= \nhalf R (\tv^T(I-\vn_{k-1}\vn_{k-1}^T)\tv)^{-\nhalf}
$$
So
$$
\alpha_k = \frac{1}{1+\nhalf R\lambda_{k-1}
(\tv^T (I-\vn_{k-1}\vn_{k-1}^T)\tv)^{-\nhalf}
}
$$
Using the definition of $\vn_{k-1}$, we have
$$
\tv^T\vn_{k-1}=\frac{\|\tv\|_2^2/\lambda_{k-1}
-\tv^T\vtho_{k-1}}{\|\tv/\lambda_{k-1}-\vtho_{k-1}\|_2}
= \frac{\|\tv\|_2^2 -\lambda_{k-1}\tv^T\vtho_{k-1}}
{\|\tv-\lambda_{k-1} \vtho_{k-1}\|_2}.
$$
Hence
\begin{align}
\tv^T(I-\vn_{k-1}\vn_{k-1}^T)\tv
&=\|\tv\|_2^2 - (\tv^T\vn_{k-1})^2\\
&= \|\tv\|_2^2
-\frac{
(\|\tv\|_2^2 -\lambda_{k-1}\tv^T\vtho_{k-1} )^2
}
{
\| \tv-\lambda_{k-1}\vtho_{k-1}\|_2^2
}\\
&=
\frac{\lambda_{k-1}^2(\|\tv\|_2^2\|\vtho_{k-1}\|_2^2 -(\tv^T\vtho_{k-1})^2)}
{
\| \tv-\lambda_{k-1}\vtho_{k-1}\|_2^2
}
\end{align}

Let $\vu_{k-1}=\vtho_{k-1}/\|\vtho_{k-1}\|_2$.
We first consider the possibility that $\vu_{k-1}=\beta \tv$
for some scalar $\beta\neq 0$. So $\tv$ and $\vu_{k-1}$ are aligned.
Since $\lambda_{k-1}<\lm$, in this case we must have $\vtho_{k-1}=\tv/\lm$.
Then $\vtho_k=\tv/\lm$ for all $k$. So this case is easily handled.
Hence we will assume below that $\vtho_{k-1}$ is not aligned with $\tv$.

We now make the assumption that there exists $C>0$ such that
$\|\vtho(\lambda)\|_2\leq C$ for $\lambda >0$.
Under this assumption we have:
\begin{align}
\lambda_{k-1} (\tv^T(I-\vn_{k-1}\vn_{k-1}^T)\tv)^{-\nhalf}
&=\left (\frac{
\|\tv-\lambda_{k-1}\vtho_{k-1}\|_2^2
}
{\|\tv\|_2^2\|\vtho_{k-1}\|_2^2 -(\tv^T\vtho_{k-1})^2} \right)^{\nhalf}\label{eq:preopt}\\
&\geq
\frac{1}{\|\vtho_{k-1}\|_2}
\left (\frac{\|\tv - \vu_{k-1}\vu_{k-1}^T\tv \|_2^2}
{\|\tv\|_2^2 -(\tv^T\vu_{k-1}) ^2} \right)^{\nhalf}
\label{eq:postopt}\\
&= \frac{1}{\|\vtho_{k-1}\|_2}\\
&\geq \frac{1}{C}.
\end{align}
Equation \eqref{eq:postopt} follows from \eqref{eq:preopt}
by optimizing over $\lambda_{k-1}$.
This gives an upper bound on $\alpha_k$
$$
\alpha_k
= \frac{1}{1+\nhalf R\lambda_{k-1}
(\tv^T (I-\vn_{k-1}\vn_{k-1}^T)\tv)^{-\nhalf}
}
\leq \frac{C}{C+\nhalf R}
<1.
$$
Set $\alpha = C/(C+\nhalf R)$. Then
$
\lat=\lambda_N = (\Pi_{j=2}^N \alpha_j) \lambda_1
$.
Hence
$
\log(\fracn{\lat}{\lambda_1}) \leq (N-1) \log\alpha
$.
Noting that $\lat/\lambda_1<1$ and $\alpha <1$,
this implies that
$$
N\leq 1+ \frac{\log(\lambda_1/\lat)}{\log(1/\alpha)}
\leq  1 + \frac{\log( 1/\lat)}{\log(1+ \nhalf R/C)}
$$

To derive this tighter bound we needed the existence
of $\alpha <1$ with $\alpha_k\leq \alpha$.
To show this, we assumed the dual regularization path is bounded.
\end{proof}

\begin{lemma}
If $\tv$ is in the range of $D$, then $\vtho(\lambda)$ is bounded.
\end{lemma}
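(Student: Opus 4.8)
The plan is to exploit the primal--dual relationship \eqref{eq:relationship}, which gives $\lambda\vtho(\lambda) = \tv - \Dict\wvo(\lambda)$, together with the dual feasibility of $\vtho(\lambda)$. First I would observe that if $\tv$ lies in the range of $\Dict$, then so does $\tv - \Dict\wvo(\lambda)$, and hence $\vtho(\lambda) = \lambda^{-1}(\tv-\Dict\wvo(\lambda))$ lies in the subspace $V = \mathrm{range}(\Dict)$ for every $\lambda>0$. Thus the entire dual regularization path is confined to the fixed subspace $V$, a fact that is independent of $\lambda$.

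Next I would bring in dual feasibility: since $\vtho(\lambda)\in\FS$, each constraint $|\at_i^T\vtho(\lambda)|\le 1$ holds, i.e.\ $\|\Dict^T\vtho(\lambda)\|_\infty\le 1$, and therefore $\|\Dict^T\vtho(\lambda)\|_2 \le \sqrt{\ncw}$. The key step is to convert this bound on $\Dict^T\vtho$ into a bound on $\vtho$ itself, and this is exactly where the hypothesis $\vtho(\lambda)\in V$ is essential: the linear map $\vth\mapsto\Dict^T\vth$ restricted to $V$ is injective, because its kernel is $\mathrm{range}(\Dict)^\perp = V^\perp$, so the only vector of $V$ killed by $\Dict^T$ is $\vzero$.

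I would then promote this injectivity to a uniform lower bound by compactness. Because the restriction of $\vth\mapsto\Dict^T\vth$ to $V$ is an injective linear map on a finite-dimensional space, the continuous function $\vth\mapsto\|\Dict^T\vth\|_2$ is strictly positive on the compact unit sphere of $V$ and hence attains a minimum $\kappa>0$ there; by homogeneity $\|\Dict^T\vth\|_2\ge\kappa\|\vth\|_2$ for all $\vth\in V$. Combining this with the feasibility bound gives $\|\vtho(\lambda)\|_2 \le \kappa^{-1}\|\Dict^T\vtho(\lambda)\|_2 \le \sqrt{\ncw}/\kappa =: C$, uniformly in $\lambda$, which is the desired conclusion.

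The only real obstacle is the passage from $\|\Dict^T\vth\|_\infty\le 1$ to a bound on $\|\vth\|_2$; without the range hypothesis this fails, since any component of $\vth$ lying in $V^\perp$ is invisible to the constraints and can be made arbitrarily large. Confining the path to $V$ removes precisely this degree of freedom, and the compactness argument then supplies the uniform constant $C$. As a sanity check, when $\rank(\Dict)=\dd$ one has $V=\R^\dd$ and the claim reduces to the unconditional boundedness cited from \cite{PMXu2013}.
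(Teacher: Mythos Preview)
Your argument is correct and follows the same three-step skeleton as the paper: (i) from $\lambda\vtho(\lambda)=\tv-\Dict\wvo(\lambda)$ conclude $\vtho(\lambda)\in\mathrm{range}(\Dict)$; (ii) bound $\Dict^T\vtho(\lambda)$; (iii) invert on the subspace $\mathrm{range}(\Dict)$ to bound $\vtho(\lambda)$ itself.

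The differences are purely in the tools used for (ii) and (iii). For (ii) you invoke dual feasibility $\|\Dict^T\vtho\|_\infty\le 1$ directly, whereas the paper uses the KKT condition $\Dict^T\vtho(\lambda)\in\partial\|\wvo(\lambda)\|_1$; since the subdifferential is contained in the $\ell_\infty$ unit ball these yield the same bound, and your route is arguably cleaner. For (iii) you use a compactness argument on the unit sphere of $V=\mathrm{range}(\Dict)$ to extract a constant $\kappa>0$ with $\|\Dict^T\vth\|_2\ge\kappa\|\vth\|_2$ on $V$, while the paper works with a compact SVD $\Dict=U\Sigma V^T$ and writes $\vtho(\lambda)\in U\Sigma^{-1}V^T\partial\|\wvo(\lambda)\|_1$ explicitly. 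These are equivalent: your $\kappa$ is precisely the smallest nonzero singular value of $\Dict$, and both arguments deliver the same bound $C=\sqrt{\ncw}/\sigma_{\min}(\Dict)$. The SVD version has the mild advantage of making the constant explicit; your compactness version has the advantage of requiring no matrix decomposition and of making transparent why the range hypothesis is exactly what is needed.
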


\begin{proof}
By the optimality of $\wvo(\lambda)$, $\vzero$ must be in the subdifferential of
$\|\wvo(\lambda)\|_1$.
Hence $ D^T(\tv- D\wvo(\lambda) ) \in  \lambda \partial \|\wvo(\lambda)\|_1 $.
Rearrangement gives
$ D^T\vtho(\lambda) \in \partial \|\wvo(\lambda)\|_1 $.
Let $D=U\Sigma V^T$ be a compact SVD. Then
$U^T \vtho(\lambda) \in \Sigma^{-1} V^T \partial \|\wvo(\lambda)\|_1 $.
Since $\tv$ in the range of $D$, so is $\vtho(\lambda)$.
Hence $\vtho(\lambda) \in U \Sigma^{-1} V^T \partial \|\wvo(\lambda)\|_1 $.
The result follows by observing
that each $g \in \partial \|\wvo(\lambda)\|_1 $ is bounded.
\end{proof}

\subsection*{Proof of Corollary \ref{cor:smlard}}
\begin{proof}
1) Let ${\vtht}_k=\vtho_k+\verr_k$. The inequality \eqref{eq:proofT1} still holds. Hence DASS terminates in a finite number of steps.

2)
At step $k$, we have the solution returned ${\vtht}_{k-1}=\vtho_{k-1}+\verr_{k-1}$, with $\|{\vtho}_{k-1}-{\vtht}_{k-1}\|_2=\|\verr_{k-1}\|_2\leq \rho$ (assumption). As discussed earlier, generically $\vtho_{k-1}$ is not aligned with $\tv$, and for $k$ large, ${\vtht}_{k-1}$ is not aligned with $\tv$ as well. Let $\vnu_{k-1}={\vtht}_{k-1}/\|{\vtht}_{k-1}\|_2$. Substituting $\vtho_{k-1}$ with ${\vtht}_{k-1}$ in Equation \eqref{eq:preopt} for the error-free case, we have
\begin{align}
\lambda_{k-1} (\tv^T(I-\vn_{k-1}\vn_{k-1}^T)\tv)^{-\nhalf}
& =\left (\frac{
\|\tv-\lambda_{k-1}{\vtht}_{k-1}\|_2^2
}
{\|\tv\|_2^2\|{\vtht}_{k-1}\|_2^2 -(\tv^T{\vtht}_{k-1})^2} \right)^{\nhalf}\\
&\geq
\frac{1}{\|{\vtht}_{k-1}\|_2}
\left (\frac{\|\tv - \vnu_{k-1}\vnu_{k-1}^T\tv \|_2^2}
{\|\tv\|_2^2 -(\tv^T\vnu_{k-1}) ^2} \right)^{\nhalf}
\\
&= \frac{1}{\|{\vtht}_{k-1}\|_2}\\
&= \frac{1}{\|\vtho_{k-1}+\verr_{k-1}\|_2}\\
&\geq \frac{1}{C+\rho}.
\end{align}
Set $\alpha^\prime = (C+\rho)/(C+\rho+\nhalf R)$. It follows similarly,
$$
N\leq 1+ \frac{\log(\lambda_1/\lat)}{\log(1/\alpha^\prime)} \leq 1+ \frac{\log(1/\lat)}{\log(1/\alpha^\prime)}.
$$
It is easy to see $\alpha^\prime>\alpha$. So for inaccurate solutions, the convergence rate is still of order $O(\log(1/\lat))$, but has a larger upper bound.
\end{proof}


\subsection*{Effects of the choice of $R$}

We show below another empirical evaluation on the effect of $R$ using a new dataset.

\textbf{YALEBXF:}
This uses the frontal face images (size $192\times 168$) of
the 38 subjects  in the extended Yale B face recognition data set \cite{Georghiades2002From,Lee2005Acquiring}. We randomly select $m=2,000$ of the $2,414$ face images as the dictionary and randomly choose $\tv$ from the remaining $414$ images.\\

\begin{figure}[h]
\centering
\includegraphics[width=0.245\textwidth]{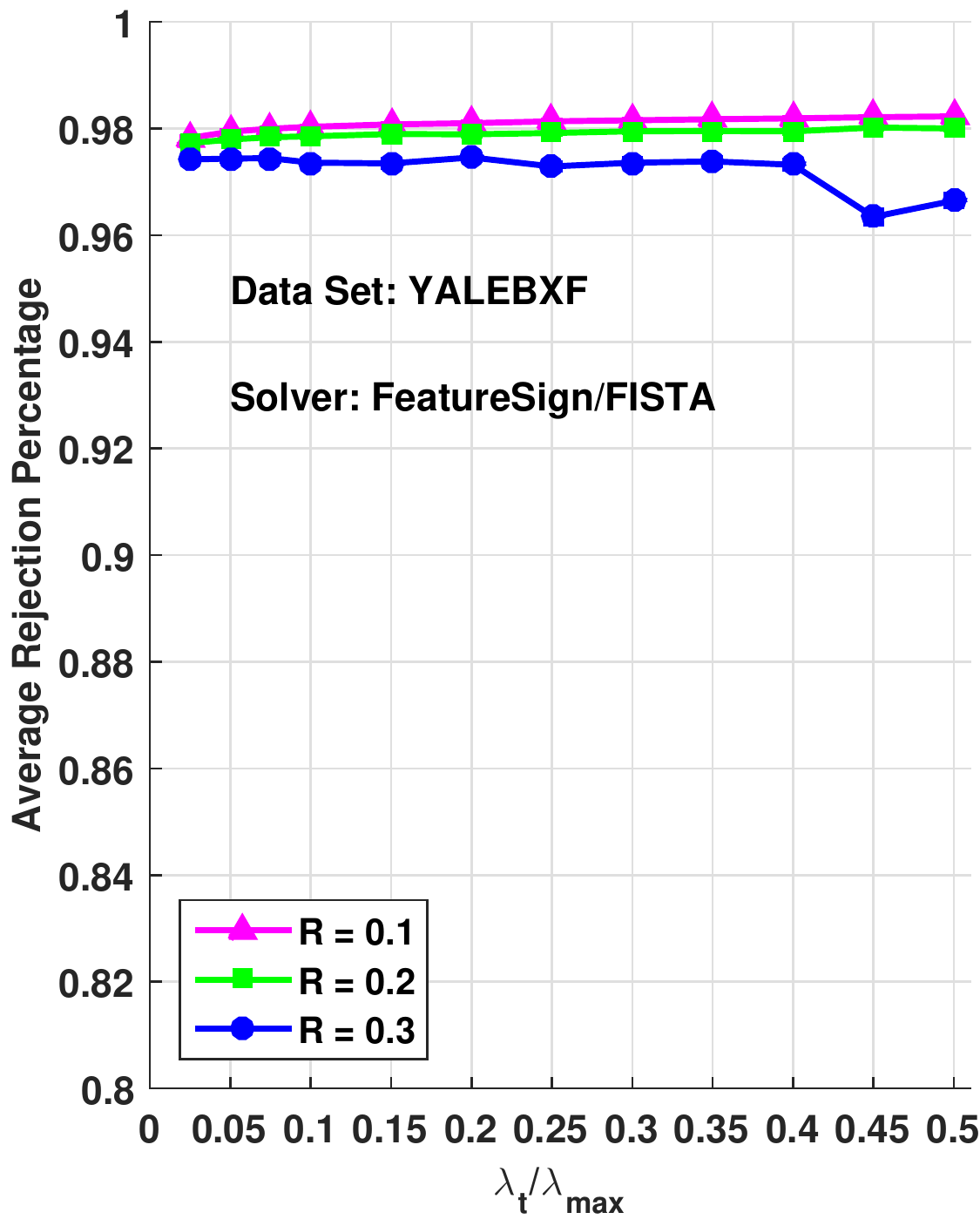}
\includegraphics[width=0.245\textwidth]{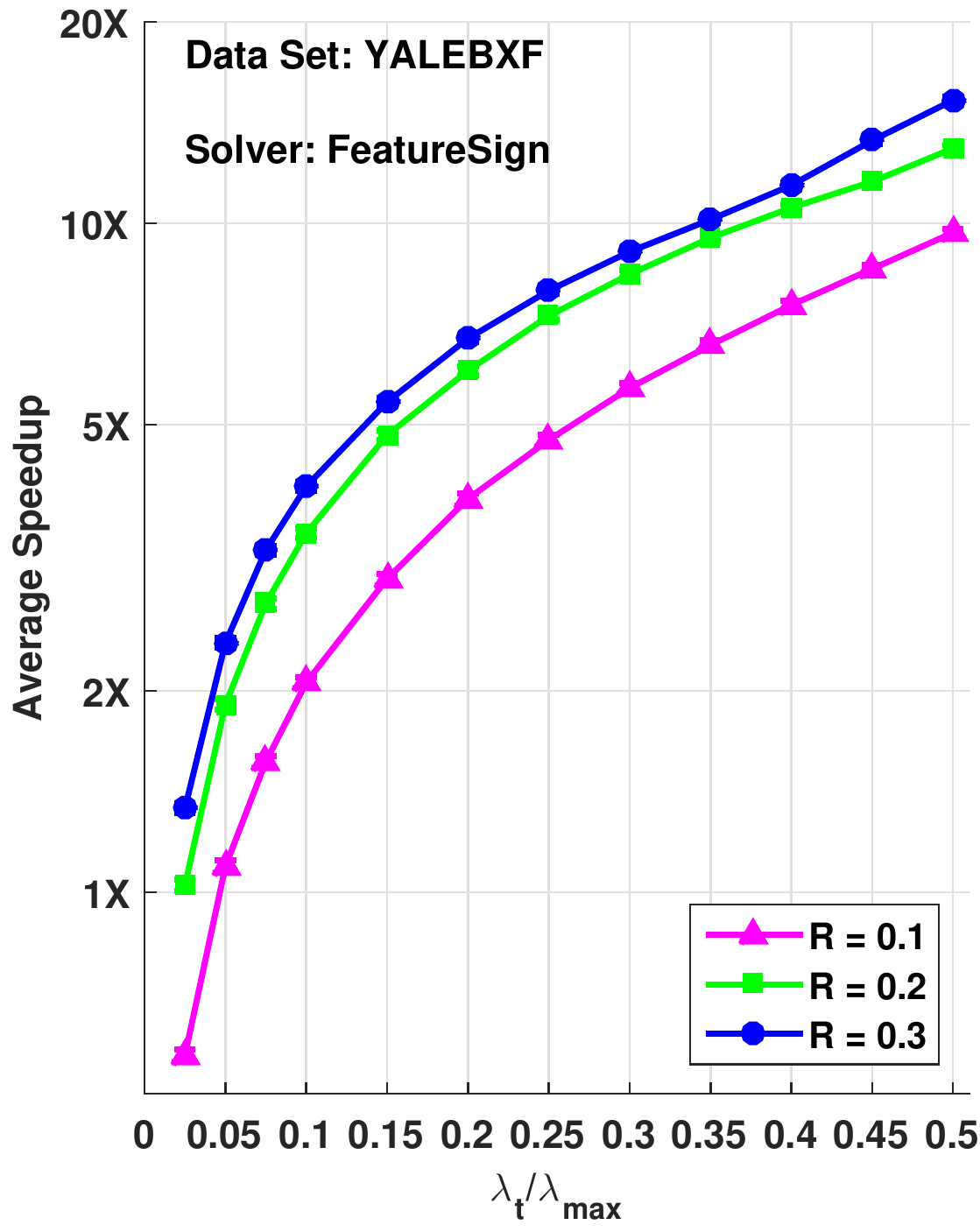}
\includegraphics[width=0.245\textwidth]{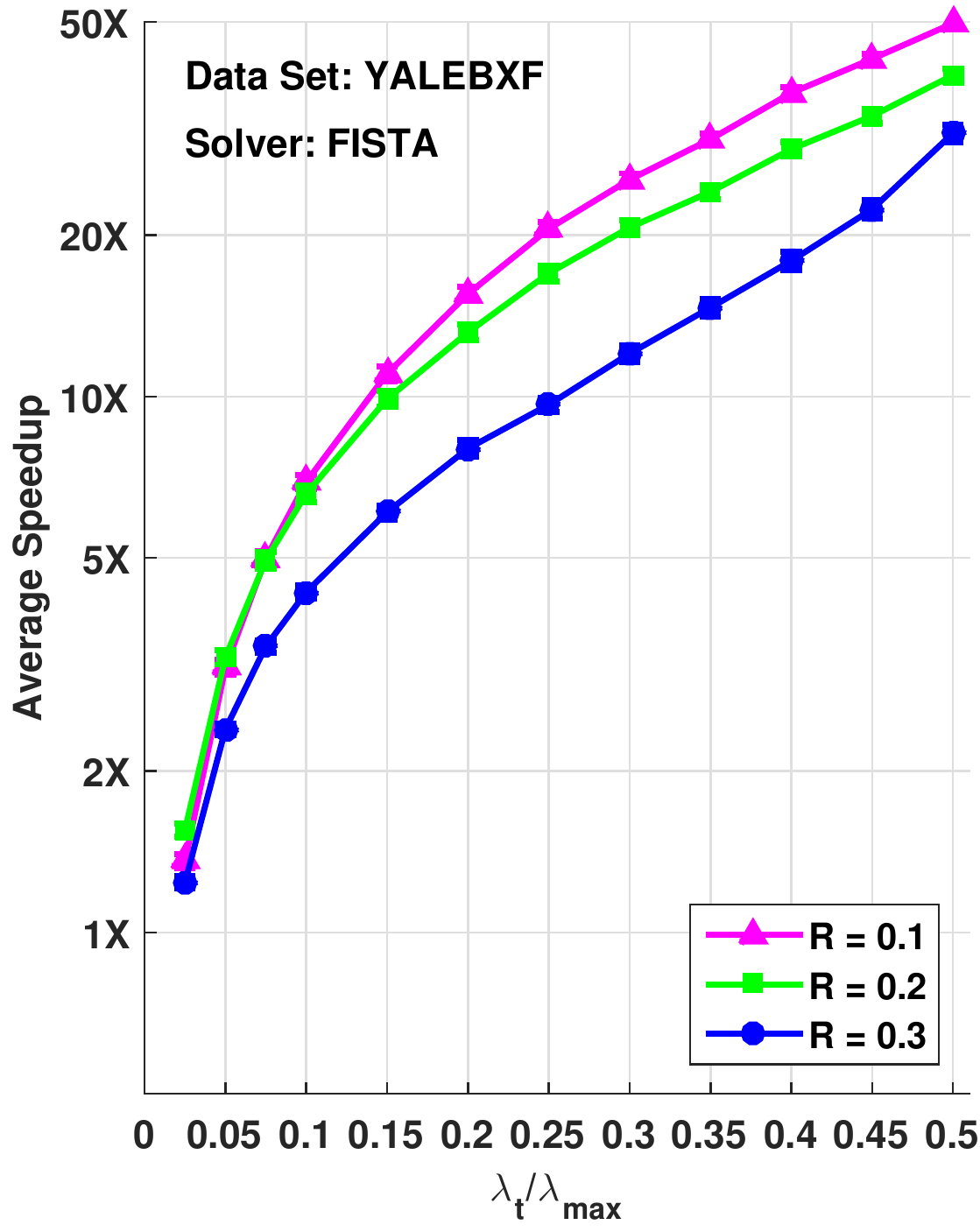}
\includegraphics[width=0.245\textwidth]{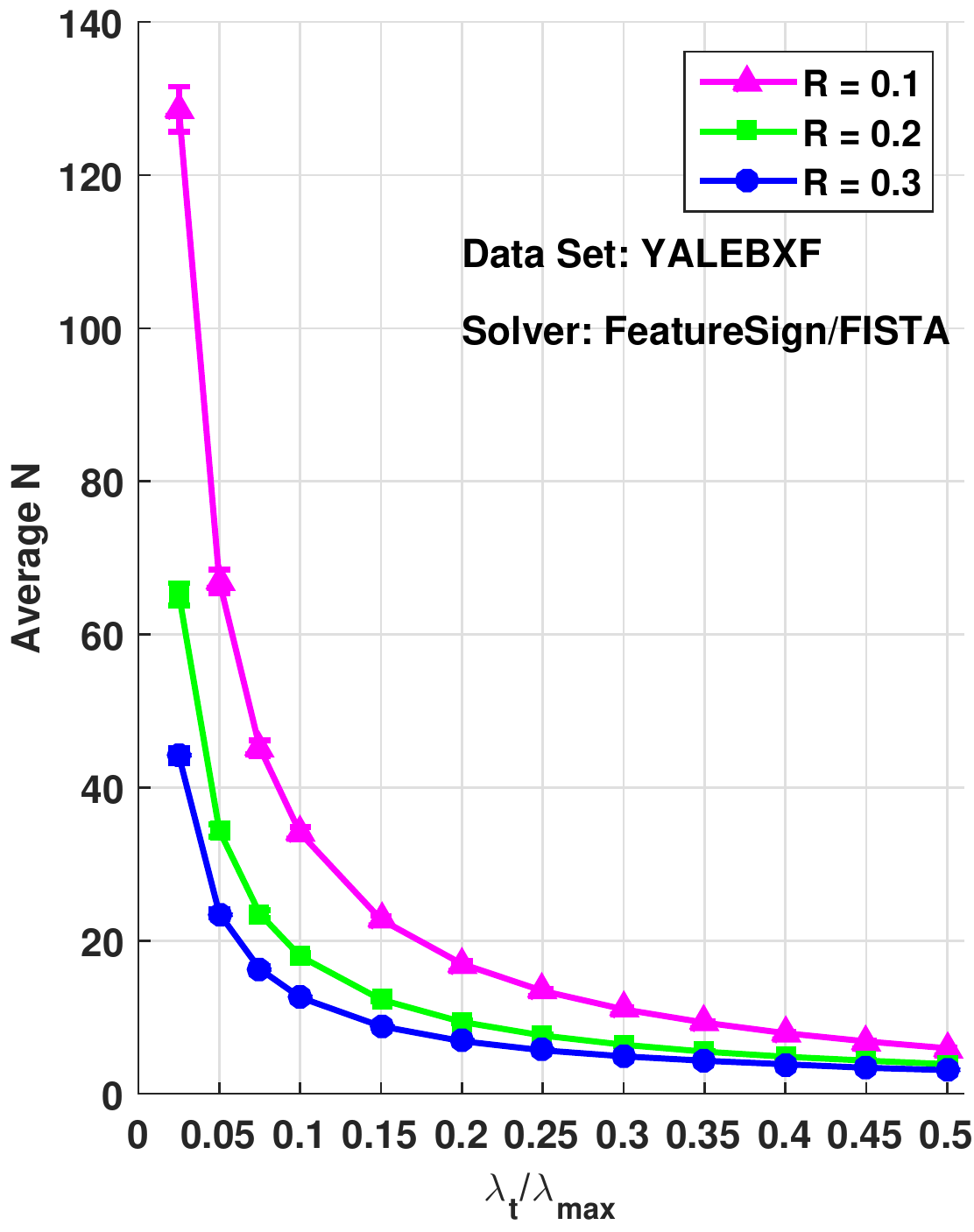}
\caption{
\small {\bf Effects of the choice of $R$ on DASS performance.}
(Left): Average rejection percentage.
(Middle): Average speedup.
(Right): The average value of $N$ over the $1200$ trials.
}
\label{fig:R2}
\end{figure}

The trend in FISTA \cite{AM2009} result is similar to that in FeatureSign \cite{Lee2007Efficient}. We observed that for the datasets we tested, FeatureSign seems to be faster than FISTA, and FISTA is more sensitive to the reduction of features when used in conjunction with a screening rule and thus has greater speedup.
\subsection*{A Feeckback rule for sequential DPP \cite{dpp2015}}
We can extend our feedback idea to the sequential DPP rule proposed in \cite{dpp2015}. Sequential DPP uses the spherical bound $S^{\text{dpp}}_k=\{\vth: \|\vth -\vtho_{k-1}\|_2
\leq 1/\lambda_k - 1/\lambda_{k-1}\} $ to one-shot screen the dictionary at step $k$. The feedback rule for Sequential DPP is $\lambda_k =(1/2R+\lambda^{-1}_{k-1})^{-1}$. Under this rule, it follows that for all $(\tv,\lat)$, the sequence terminates after a finite number of steps $N \leq 1+\frac{2(1/\lat-1/\lambda_1)}{R} \leq 1+\frac{2/\lat}{R}$, and $\diam(S^{\text{dpp}}_k) \leq R$, $k=2\mc N$. Compared with DASS, feedback controlled sequential DPP has a larger upper bound on the number of iterations required.

\end{document}